\newcommand{\E}{\ensuremath{\mathbb{E}}}    
\newcommand{\Tau}{\mathcal{T}}              
\newcommand{\Oh}{\mathcal{O}}               
\newtheorem{proposition}{Proposition}
\crefname{proposition}{Proposition}{Propositions}
\crefname{assume}{Assumption}{Assumptions}
\crefname{lemma}{Lemma}{Lemmas}
\newtheorem{thm}{Theorem}
\crefname{thm}{Theorem}{Theorems}
\DeclarePairedDelimiterX{\divergex}[2]{(}{)}{%
    #1\;\delimsize\|\;#2%
}
\newcommand{\kld}{D_\text{KL}\divergex}
\DeclareMathOperator*{\argmax}{arg\,max}    
\DeclareMathOperator{\zum}{sum}             
\newcommand{\MDP}{\ensuremath{\mathcal{M}}}
\newcommand{\States}{\ensuremath{\mathcal{S}}}
\newcommand{\Actions}{\ensuremath{\mathcal{A}}}
\newcommand{\Parent}{\ensuremath{\mathcal{P}}}
\newcommand{\Children}{\ensuremath{\mathcal{C}}}
\newcommand{\Data}{\ensuremath{\mathcal{D}}}
\newcommand{\Ie}{\textit{I.e.}}             
\newcommand{\ie}{\textit{i.e.}}             
\newcommand{\eg}{\textit{e.g.}}             
\newcommand{\Eg}{\textit{E.g.}}             
\newcommand{\nb}{\textit{n.b.}}             
\newcommand{\Nb}{\textit{N.b.}}             
\newcommand{\cf}{\textit{cf}.}              
\newcommand{\etal}{\textit{et al.}}         
\newcommand{\naive}{na\"\i{}ve}             
\newcommand{\vav}{vis-\'a-vis}               
\newcommand{\wrt}{\text{w.r.t.}}            
\let\oldnl\nl
\newcommand{\nonl}{\renewcommand{\nl}{\let\nl\oldnl}}
\titleformat*{\paragraph}{\bfseries}
\begin{document}

\title{%
    Revisiting Maximum Entropy Inverse Reinforcement Learning:
    \\
    New Perspectives and Algorithms%
}

\author{%
    Aaron J. Snoswell\thanks{School of Information Technology and Electrical Engineering, The University of Queensland}
    \and
    Surya P. N. Singh\thanks{Intuitive Surgical}
    \and
    Nan Ye\thanks{School of Mathematics and Physics, The University of Queensland}
}

\newcommand{\note}[1]{Note: #1}

\maketitle

\begin{abstract}
    We provide new perspectives and inference algorithms for Maximum Entropy (MaxEnt) Inverse Reinforcement Learning (IRL), which provides a principled method to find a most non-committal reward function consistent with given expert demonstrations, among many consistent reward functions.
	
	We first present a generalized MaxEnt formulation based on minimizing a KL-divergence instead of maximizing an entropy.
	This improves the previous heuristic derivation of the MaxEnt IRL model (for stochastic MDPs), allows a unified view of MaxEnt IRL and Relative Entropy IRL, and leads to a model-free learning algorithm for the MaxEnt IRL model.
	Second, a careful review of existing inference algorithms and implementations showed that they approximately compute the marginals required for learning the model.
	We provide examples to illustrate this, and present an efficient and exact inference algorithm.
	Our algorithm can handle variable length demonstrations; in addition, while a basic version takes time quadratic in the maximum demonstration length $L$, an 
	improved version of this algorithm reduces this to linear using a padding trick.
	
	Experiments show that our exact algorithm improves reward learning as compared to the approximate ones.
	Furthermore, our algorithm scales up to a large, real-world dataset involving driver behaviour forecasting.
	We provide an optimized implementation at our open-source project repository.
	Our new insight and algorithms could possibly lead to further interest and exploration of the original MaxEnt IRL model.
\end{abstract}

\section{Introduction}

Inverse Reinforcement Learning (IRL) searches for a reward or cost
function to rationalize observed behaviour.
This is challenging because the same reward may be optimized by different
behaviors, and optimizing different reward functions can lead to the same
behavior.
In their seminal work Ziebart \etal{} developed a principled solution using the Maximum Entropy
(MaxEnt) principle to choose the most non-committal consistent reward \cite{Ziebart2008} -- \ie{} a reward which matches demonstrated feature counts but makes no additional assumptions about the demonstrated behaviour.
Variations of this idea have seen great success in many recent works --- including models based on causal entropy \cite{Ziebart2010a, Ziebart2010b}, and efficient sample-based methods that maximize state-conditioned policy entropy \cite{Finn2016a,Fu2017}.

Despite the long line of works based on the original MaxEnt IRL paper, we
believe that the full value of the original MaxEnt IRL model might not have been fully realized yet, for two reasons.
First, while the MaxEnt IRL model for deterministic Markov Decision Processes
(MDPs) has been rigorously derived from the MaxEnt principle, the corresponding model for
stochastic MDPs was based on a heuristic argument.
While this has motivated alternative formulations such as causal Maximum Causal Entropy (MaxCausalEnt) IRL
\cite{Ziebart2010a}, we were also interested to know whether there is a simple
and rigorous justification of the MaxEnt IRL model for stochastic MDPs, which might bring new insight.
Second, the published inference (\ie{}, marginal computation) algorithms
\cite{Ziebart2008,Ziebart2010} for MaxEnt IRL only approximately compute the
marginals required for learning -- as confirmed by the original
authors.\footnote{Personal correspondence with B. Ziebart.}
Existing implementations online have largely followed these algorithms and are
approximate.
While approximate algorithms are often sufficient for achieving good
generalization performance, we are interested in developing exact algorithms in
this case, and study whether they can achieve better reward learning than approximate algorithms.


Motivated by the above questions, we revisit the MaxEnt IRL model and present
some new perspectives, algorithms, and empirical insights.

\begin{enumerate}
    \item We provide a simple and rigorous derivation of the original MaxEnt IRL
			model using the Kullback-Liebler divergence, without using any heuristic
			argument.
			Our derivation provides a unified view for MaxEnt IRL and relative
			entropy IRL \cite{Boularias2011}, and highlights a key difference between
			these two frameworks.
			In addition, the connection between them suggests an model-free
			importance sampling algorithm for learning a MaxEnt IRL model
			(\cref{sec:the-principle-of-maximum-entropy}).
    \item We present an efficient exact forward-backward inference algorithm.
			This allows exact computation of the gradients used in reward learning.
			Unlike previous work, our algorithm does not assume that the demonstrations are of the same
			length. 
			While this increases the time complexity from linear in $L$ to quadratic
			in $L$, we bring the time complexity back to linear in $L$ using a
			padding trick (\cref{sec:a-new-algorithm}).
    \item We illustrate that published MaxEnt IRL algorithms
			\cite{Ziebart2008,Ziebart2010} are approximate, and 
			empirically show that exact algorithms improve reward learning.
			In addition, we show that our algorithm can scale up to a large,
			real-world dataset involving driver behaviour forecasting
			(\cref{sec:experimental-results}).
    \item We provide an open-source optimized reference implementation of our algorithms to allow easy application to other problems.
\end{enumerate}

Finally, we conclude with a discussion on opportunities for future work (\cref{sec:discussion}).

\section{Related Work}
\label{sec:related-work}

MaxEnt IRL theory was proposed in \parencite{Ziebart2008,Ziebart2010b}.
This framework has seen great success, and many extensions of this model have been proposed e.g. for continuous state and action spaces \parencite{Aghasadeghi2011,Kalakrishnan2013} or using deep non-linear reward representations \parencite{Wulfmeier2017,Finn2016}.
As pointed out in the introduction, we present a simple derivation of MaxEnt IRL
that improves the original heuristic derivation, leading to a unified view of
the original theoretical framework \parencite{Ziebart2008} and the Relative Entropy
(RE) IRL method \parencite{Boularias2011}.
In addition, the unified view directly leads to a model-free learning algorithm
for MaxEnt IRL, which is an analogue of that for RE-IRL.

The basic version of our algorithm is adapted from the exact inference
algorithms for linear-chain conditional random fields \parencite{Lafferty2001}.
We present our algorithm in a form that supports any combination of state-,
state-action, and/or state-action-state features.
In addition, our algorithm has several important features that make it flexible:
(a) It does not require the demonstration trajectories to be of the same
length, and is thus capable of handling variable length demonstrations, and 
(b) Our algorithm also supports discounting, episodic or continuing MDPs, and MDPs with state-dependent action sets, and a straight-forward application of the methods from \parencite{Wulfmeier2017} can extend
our algorithm to the case of learned non-linear reward functions, such as a deep
neural networks.
Importantly, while adding support for variable-length paths increases the time complexity of our basic algorithm, we are able to achieve linear time complexity in the length of the longest demonstration path $L$ using a padding trick.

Our algorithms can be viewed as special cases of the well-known sum-product
algorithm for graphical models \parencite{kschischang2001factor}.
This perspective allows extending our algorithm to handle more complex features.
We leave this to future work.


\section{Inverse Reinforcement Learning}
\label{sec:irl}

We consider IRL in the context of an MDP $\MDP = \{\States, \Actions, p_0, T, \gamma, R\}$, with discrete states $s \in \States$, discrete actions $a \in \Actions$, starting state distribution $p_0(s)$, transition dynamics $T = p(s' \mid s, a)$, a discount factor $\gamma \in [0, 1)$, and a reward function denoted $R$, which we define in further detail below.

For episodic MDPs, we also designate the non-empty sub-set of MDP states that are terminal $\States^T \subseteq \States$.
\Ie{} encountering any terminal state $s^T \in \States^T$ grants the agent reward for encountering that state $R(s^T)$, but then immediately ends the episode of interaction with the MDP.
This has important implications for the process by which we assume the IRL dataset is generated -- as we show below.

A policy $\pi(a \mid s)$ provides a (possibly deterministic) mapping from states to actions and describes a strategy to navigate the MDP.
We denote a `sample' from a policy as a state-action trajectory ending with a state $\tau = ((s_1, a_1), \dots, (s_m, \texttt{None}))$, with length denoted by $|\tau|$.
For convenience, we also denote $\Parent(s') \triangleq \{(s, a) : T(s, a, s') > 0\}$ as the set of $(s, a)$ tuples that are valid \emph{parents} of the state $s'$ according to the MDP dynamics, and $\Children(s) \triangleq \{(a, s') : T(s, a, s') > 0\}$ as the set of $(a, s')$ tuples that are valid \emph{children} of the state $s$ according to the MDP dynamics.

In general, the domain of a reward function could be the set of states $R_s(s)$, or state-action pairs $R_{sa}(s, a)$, or state-action-state tuples $R_{sas'}(s, a, s')$.
In the interests of completeness and accuracy, the derivations in the following sections proceed with the most general reward structure possible -- \ie{} we allow for MDPs that include any combination of these reward function types.
Furthermore, we limit our focus to linear reward functions with known basis feature functions, \ie{}
\begin{align}
    R_s(s) &\triangleq \theta_{s}^\top \phi_s(s)
    \\
    R_{sa}(s, a) &\triangleq \theta_{sa}^\top \phi_{sa}(s, a)
    \\
    R_{sas'}(s, a, s') &\triangleq \theta_{sas'}^\top \phi_{sas'}(s, a, s')
\end{align}
\noindent where we will drop the subscripts for brevity when context provides the needed clarity.
\Eg{} to transform one of our algorithms below to the case of an MDP that contains only state-action rewards, the reader could simply substitute $\theta_s = 0$ and $\theta_{sas'} = 0$, then simplify all the equations that contain these terms.

We also define the application of a reward function to a trajectory (taking into account discounting) as follows;
\begin{align}
    R(\tau) &\triangleq
    \sum_{t=1}^{|\tau|} \gamma^{t-1} R_s(s_t) +
    \sum_{t=1}^{|\tau|-1} \gamma^{t-1} R_{sa}(s_t, a_t) + \gamma^{t-1} R_{sas'}(s_t, a_t, s_{t+1})
    \\
    &= \theta_s^\top \phi_s(\tau) + \theta_{sa}^\top \phi_{sa}(\tau) + \theta_{sas'}^\top \phi_{sas'}(\tau),
\end{align}
\noindent where we have defined
\begin{align}
    \phi_s(\tau) &\triangleq \sum_{t=1}^{|\tau|} \gamma^{t-1} \phi_s(s_t)
    \\
    \phi_{sa}(\tau) &\triangleq \sum_{t=1}^{|\tau|-1} \gamma^{t-1} \phi_{sa}(s_t, a_t)
    \\
    \phi_{sas'}(\tau) &\triangleq \sum_{t=1}^{|\tau|-1} \gamma^{t-1} \phi_{sas'}(s_t, a_t, s_{t+1})
\end{align}
\noindent for convenience.

In our IRL setting, we are provided with a set of demonstration trajectories $\Data = \{\tau_1, \dots, \tau_N\}$, and a partial MDP definition $\MDP \backslash R$ -- \ie{} we know the MDP dynamics, but not the reward function parameter(s) $\Theta = \{\theta_s, \theta_{sa}, \theta_{sas'}\}$.
The goal is to identify these parameter vectors such that the demonstration data appear `optimal' according to some criteria (e.g. maximizing cumulative $\gamma$-discounted rewards).

We make no assumption that the demonstrated paths $\Data$ are of equal lengths -- \eg{} this can occur naturally in stochastic episodic MDPs where we assume some exogenous process allows the agent to re-start episodes after encountering a terminal state.
On the other hand, in a continuing (non-episodic) MDP, the demonstration data should technically consist of one continuous trajectory of interaction data, however we allow that there may be some exogenous process by which the episode of interaction can be terminated at any point and re-started -- a common practice in Reinforcement Learning experiments.
Thus, in both the episodic and non-episodic cases, we must be prepared to handle data with trajectories of varying lengths.
As we discuss below, previous MaxEnt IRL algorithms only supported datasets where the trajectories are all the same length -- a key limitation that we address in this chapter.

Returning to the general problem statement of Inverse Reinforcement Learning, \citet{Abbeel2004} showed that for an MDP with linear rewards, to learn a policy $\pi$ with the same value as the demonstrator, it suffices to match \emph{feature expectations}, \ie{}, choose parameters $\Theta$ that induce a policy $\pi$ such that
\begin{align}
    \E_{\tau \sim \pi}[\phi(\tau)] &= \E_{\tau \sim \Data}[\phi(\tau)]
\end{align}
\noindent where the RHS are empirical expectations over the demonstration data.
While a useful starting point, this problem is ill-posed, because generally, many polices have matching feature expectations.
The problem is further complicated by the fact that positive-affine reward `shaping' transformations do not change the optimal policy \parencite{Ng1999}.

Early IRL methods generally relied on heuristics or probabilistic assumptions to resolve the ambiguity of a consistent reward.
On the other hand, the Maximum Entropy IRL approach provides a principled way to identify unique reward parameters.

\section{The Principle of Maximum Entropy}
\label{sec:the-principle-of-maximum-entropy}

The MaxEnt IRL model defines a distribution on the set $\Tau$ of all feasible
trajectories as
\begin{align}
    p_{\Theta}(\tau) \triangleq \frac{
        q(\tau)e^{R(\tau)}
    }{ 
        Z(\Theta)
    },
    \label{eq:maxent-dist}
\end{align}
where $q$ is the (un-normalized) distribution induced by MDP dynamics alone
\begin{align}
    q(\tau) \triangleq p_0(s_1) \prod_{t=1}^{|\tau|-1} T(s_{t+1} \mid s_t, a_t),
\end{align}
and 
\begin{align}
    Z(\Theta) \triangleq \sum_{\tau' \in \Tau} q(\tau') e^{R(\tau)},
\end{align}
is the normalizing constant often known as the \textit{partition function}.
In addition, the parameters $\Theta$ are chosen to maximize the
log-likelihood given $\Data$,
\begin{align}
    \label{eq:log-likelihood}
    \ell(\Theta) =
    \theta^\top \E_{\tau \sim \Data}[\phi(\tau)] + 
    \frac{1}{|\Data|}\sum_{\tau \in \Data} \log q(\tau)
    - \log Z(\Theta),
\end{align}
We can interpret $p_\Theta(\tau)$ as a non-stationary policy, which is more expressive than a stationary policy as it can vary over time-steps.
Furthermore, the MaxEnt IRL framework allows all possible behaviors to be jointly learned due to global normalization.
This makes it potentially more powerful in complex domains, as compared to models which learn a stationary policy or do not perform global normalization.   

As a contribution of this paper, we show that the MaxEnt IRL model is the solution of
\begin{align}
    \min_p ~ \kld{p}{q}&  \label{eq:kl}
    \\
    \text{s.t.} \qquad
    \E_{\tau \sim p}[\phi(\tau)] &= \E_{\tau \sim \Data}[\phi(\tau)], 
    \nonumber
    \\
    \sum_{\tau \in \Tau} p(\tau) &= 1,
    \nonumber
    \\
    p(\tau) &\ge 0 & \forall \tau \in \Tau,
    \nonumber
\end{align}
\noindent where
\begin{align}
    \kld{p}{q} = \sum_{\tau \in \Tau} p(\tau) ( \log p(\tau) - \log q(\tau) ),
\end{align}
is the Kullback-Leibler divergence from $p$ to $q$.
The feature moment matching constraints ensure the learned non-stationary policy $p$ will match the preferences demonstrated in the data, while the minimization objective forces the model close to the natural dynamics of the MDP.
This leads to a unique reward parameter solution, thus resolving the reward ambiguity problem.
Our proof is straight-forward, and similar in nature to that for RE-IRL \parencite{Boularias2011}.

\begin{thm}
    The solution of \cref{eq:kl} is given by \cref{eq:maxent-dist}. 
\end{thm}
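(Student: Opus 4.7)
The plan is to treat \cref{eq:kl} as a constrained convex optimization problem and solve it via Lagrange multipliers. The objective $\kld{p}{q}$ is strictly convex in $p$, and all constraints are linear in $p$, so the KKT conditions are both necessary and sufficient for a unique global optimum; the inequality constraints $p(\tau) \ge 0$ will turn out to be inactive at the stationary point, and so can be ignored throughout the derivation.

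First I would form the Lagrangian with a vector multiplier $\theta$ for the feature-matching constraints and a scalar multiplier $\lambda$ for the normalization constraint,
\begin{equation*}
    L(p, \theta, \lambda) = \sum_{\tau \in \Tau} p(\tau) \log \frac{p(\tau)}{q(\tau)} - \theta^\top \left( \E_{\tau \sim p}[\phi(\tau)] - \E_{\tau \sim \Data}[\phi(\tau)] \right) - \lambda \left( \sum_{\tau \in \Tau} p(\tau) - 1 \right).
\end{equation*}
Computing $\partial L / \partial p(\tau) = 0$ for each $\tau \in \Tau$ yields the stationarity condition $\log p(\tau) - \log q(\tau) + 1 - \theta^\top \phi(\tau) - \lambda = 0$, so that $p(\tau) = q(\tau) \exp\!\left( \theta^\top \phi(\tau) + \lambda - 1 \right)$. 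Since the linear reward assumption identifies $R(\tau) = \theta^\top \phi(\tau)$, enforcing $\sum_\tau p(\tau) = 1$ then pins down $e^{\lambda - 1} = 1/Z(\Theta)$ and recovers exactly \cref{eq:maxent-dist}. Non-negativity is automatic because $q(\tau) \ge 0$ and the exponential is strictly positive, confirming that the omitted inequality constraints are indeed inactive.

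The main step worth flagging is conceptual rather than technical: one must identify the Lagrange multipliers on the feature-matching constraints with the reward parameters $\Theta$. This identification is what connects the KL-minimization formulation to the MaxEnt IRL reward model, and it parallels the well-known duality between maximum-entropy modelling and maximum-likelihood estimation in exponential families (analogous to the RE-IRL argument cited in the text). The specific value of $\Theta$ is then fixed by the requirement that the resulting distribution satisfy the feature-matching constraint, which is equivalent to maximizing the log-likelihood in \cref{eq:log-likelihood}. Uniqueness of the primal solution follows from strict convexity of $\kld{p}{q}$ on the interior of the simplex.
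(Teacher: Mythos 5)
Your proof is correct and follows essentially the same route as the paper's: form the Lagrangian with a vector multiplier for the feature-matching constraints and a scalar multiplier for normalization, set $\partial \mathcal{L}/\partial p(\tau) = 0$, identify the feature multiplier with the reward parameters $\Theta$ so that the stationary point is $q(\tau)e^{R(\tau)}$ up to a constant fixed by normalization. Your added observations --- that the non-negativity constraints are inactive because the exponential form is strictly positive, and that strict convexity of the KL objective makes the stationary point the unique global minimum --- are welcome rigor that the paper's proof leaves implicit, but they do not change the underlying argument.
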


\begin{proof}
    The Lagrangian of \cref{eq:kl} is given by,
    \begin{align}
        \mathcal{L}(
        p(\tau_1), \dots, &p(\tau_{|\Tau|}), \lambda_1, \dots, \lambda_{|\Tau|+4}, s_1, \dots, s_{|\Tau|}
        ) =
        \nonumber
        \\
        &\sum_{\tau \in \Tau} p(\tau) ( \log p(\tau) - \log q(\tau) )
        \nonumber
        \\
        & - \lambda_1^\top \left(
        \sum_{\tau \in \Tau} p(\tau) \phi(\tau) - \frac{1}{|\Data|} \sum_{\tau \in \Data} \phi(\tau)
        \right)
        \nonumber
        \\
        & - \lambda_2 \left(
        \sum_{\tau \in \Tau} p(\tau) - 1
        \right)
        \nonumber
    \end{align}
    \noindent where $\lambda_1, \lambda_2$ are the Lagrange multipliers.
    Differentiating \wrt{} $p(\tau)$, we get
    \begin{align}
        \nabla_{\Theta} \mathcal{L} &=
        \log p(\tau) - \log q(\tau)
        - \lambda_1^\top \phi(\tau)
        - \lambda_2 + 1
    \end{align}
    Re-defining $\lambda_2' = \lambda_2 + 1$, equating the partial derivatives with zero, and re-arranging, we get
    \begin{align}
        p(\tau) &= q(\tau) \exp(
        \lambda_1^\top \phi(\tau)
        )
        \exp(\lambda_2')
        \\
        p(\tau) &= \frac{
            q(\tau) \exp(
            \theta^\top \phi(\tau)
            )
        }{
            \exp{-\lambda_2'}
        }
        \\
        p_\Theta(\tau) &= \frac{
            q(\tau) e^{R(\tau)}
        }{
            Z(\Theta)
        }
    \end{align}
\end{proof}

We highlight here a few new insights from this interpretation.
First, this interpretation shows that the MaxEnt IRL model chooses a model that
best agrees with the transition dynamics under the feature matching constraints.
Second, RE-IRL and MaxEnt IRL are the same, however they use 
different reference distributions $q$.
Specifically, the reference distribution used in RE-IRL is 
$q_{\pi_{0}}(\tau) 
= q(\tau) w_{\pi_{0}}(\tau)$, where $\pi_{0}$ is a baseline policy,
and $w_{\pi_{0}} = \prod_{t=1}^{|\tau|-1} \pi_{0}(a_{t} \mid s_t)$.
While MaxEnt IRL and RE-IRL can now be seen as special cases of a unified
model, this also reveals a subtle but important difference:
RE-IRL aims to agree with a baseline policy and the transition dynamics, while
MaxEnt IRL aims to agree with the transition dynamics only.
If we only consider trajectories of the same length, then MaxEnt IRL can be seen
as RE-IRL with a uniform baseline policy.
Third, such a connection between MaxEnt IRL and RE-IRL allows us to directly
adapt the model-free importance sampling learning algorithm for RE-IRL to
MaxEnt IRL:
we simply replace all occurrences of $w_{\pi_{0}}(\tau)$ with the value `1' in the their
gradient estimator (\cf{} Eq. (8) in \parencite{Boularias2011}).
However, this approach may be biased towards shorter demonstrations as
they will have larger weights in the estimator.
we explore this approach further in \cref{sec:model-free-maxent}


\section{A New Algorithm}
\label{sec:a-new-algorithm}

To learn the MaxEnt IRL model in \cref{eq:maxent-dist}, we need to maximize the
log-likelihood, which is convex in $\Theta$ and thus can be maximized using
standard gradient-based methods --- in our experiments we used L-BFGS-B.
The value and the gradient of the log-likelihood, required in the optimization
algorithm, can be computed using the partition function $Z(\Theta)$ and
the marginal distributions $p_{\Theta,t}(s)$, $p_{\Theta,t}(s, a)$, and $p_{\Theta,t}(s, a, s')$, which denote the probability that the $t$-th state / state-action / state-action-state are $s$, $(s,a)$, or $(s,a,s')$ respectively when $\tau$ is sampled from the MaxEnt distribution $p_\Theta(\tau)$.
Specifically, with the partition function, we can easily compute the log-likelihood using \cref{eq:maxent-dist}.
On the other hand, with the marginals terms, the required gradients are given by,
\begin{align}
    \label{eq:s-grad}
    \nabla_{\theta_s} \ell(\Theta) &=
    \E_{\tau \sim \Data}[\phi_s(\tau)] - \sum_{s \in \States} \phi_s(s)
    \sum_{t=1}^L p_{\Theta,t}(s),
    \\
    \label{eq:sa-grad}
    \nabla_{\theta_{sa}} \ell(\Theta) &=
    \E_{\tau \sim \Data}[\phi_{sa}(\tau)] -
    \sum_{s \in \States}\sum_{a \in \Actions} \phi_{sa}(s, a)
    \sum_{t=1}^{L-1} p_{\Theta,t}(s, a).,
    \\
    \label{eq:sas-grad}
    \nabla_{\theta_{sas'}} \ell(\Theta) &=
    \E_{\tau \sim \Data}[\phi_{sas'}(\tau)] -
    \sum_{s \in \States}\sum_{a \in \Actions}\sum_{s' \in \States} \phi_{sas'}(s, a, s')
    \sum_{t=1}^{L-1} p_{\Theta,t}(s, a, s').
\end{align} 

Explicit computation of the partition function and marginals grows exponentially in time for longer demonstration paths, however the Markov property of the MDP allows us to decompose the partition and marginal feature values recursively with an efficient forward-backward algorithm.

This was previously discussed in \parencite{Ziebart2008}, and in an updated version of that paper \parencite{Ziebart2010}, however their algorithm relied on a heuristic for the case of stochastic MDPs.
We find that this leads to approximate gradients (see proofs in \cref{App:MaxEnt}) and negatively impacts the reward learning process (see experiments in \cref{sec:experimental-results}).
We also note that these algorithms were derived only for the case of un-discounted MDPs with state-based rewards 
--- our algorithm adds support for discounted MDPs, and for MDPs with reward functions consisting of any combination of state-, state-action, and/or state-action-state features.

Complementing and extending these previous works, we construct a novel dynamic program that computes MaxEnt IRL gradients that are exact, even for the case of stochastic MDPs.
The algorithm utilizes partial versions of the partition function, known as message-passing variables, which we describe below.

\subsection{An intuition for message-passing algorithms}

To illustrate the derivation of our dynamic program, it is helpful to consider an example MDP with four states and a single action \cref{fig:linear-mdp}.

\begin{figure*}[h]
    \centering
    \includegraphics[width=0.5\linewidth]{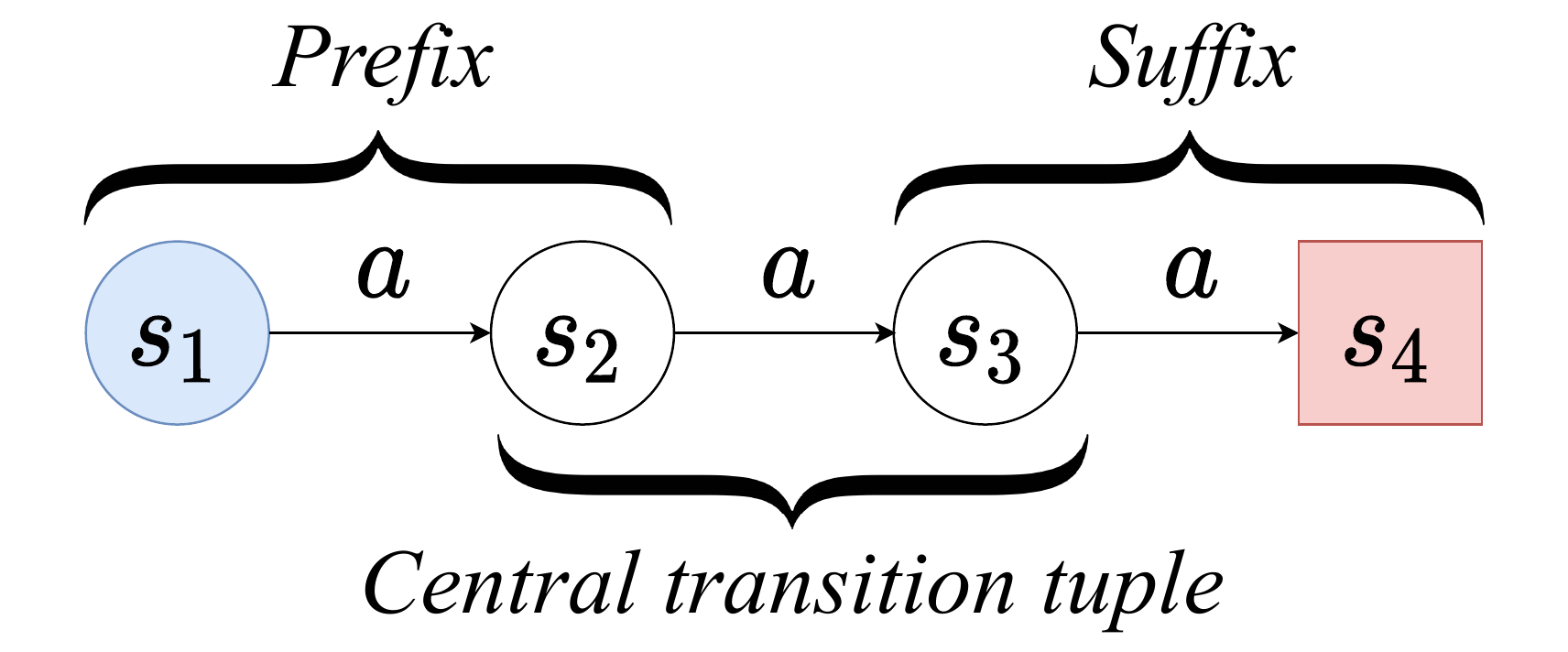}
    \caption{%
        An example MDP with four states and a single action.
        The MDP has a single deterministic starting state $s_1$ and a single terminal state $s_4$.
    }
    \label{fig:linear-mdp}
\end{figure*}

The set of paths of length $l = 4$ contains a single path,
\begin{align*}
    \tau = ((s_1, a), (s_2, a), (s_3, a), (s_4, \texttt{None})).
\end{align*}
Inspecting the partition contribution from this path, it is evident that we can decompose this term into three components: a `prefix' that ends with the state $s_2$, the central $(s_2, a, s_3)$ tuple, and a `suffix' that begins with the state $s_3$.
\begin{align*}
    p(\tau_1) &=
    \overbrace{
        \left(
        p_0(s_1) e^{R(s_1)}
        ~
        T(s_2 \mid s_1, a) e^{R(s_1, a) + R(s_1, a, s_2) + \gamma R(s_2)}
        \right)
    }^{\text{Length $t$ prefix ending at $s_2$}}
    \nonumber
    \\
    &\qquad \times \left(
    T(s_3 \mid s_2, a) e^{\gamma R(s_2, a) + \gamma R(s_2, a, s_3)}
    \right)
    \\
    \nonumber
    &\qquad\qquad \times \underbrace{
        \left(
        e^{\gamma^2 R(s_3)}
        ~
        T(s_4 \mid s_3, a) e^{\gamma^2 R(s_3, a) + \gamma^2 R(s_3, a, s_4) + \gamma^3 R(s_4)}
        \right)
    }_{\text{Length $l - t$ suffix beginning with $s_3$}}
\end{align*}
The same prefix-suffix pattern holds true when expressing the partition contribution for any path in a general MDP, however the prefix and suffix must sum over \emph{all} possible paths leading up to, or away from the central transition tuple.
Specifically, for a set of paths of lengths exactly $l$, the marginal state-action probability for a tuple $(s, a, s')$ occuring a time step $t$ will consist of three components:

\begin{enumerate}
    \item A path prefix counting the probability mass for all length $t$ paths that end at $s$
    \item The actual probability of the $(s, a, s')$ event
    \item A suffix counting the probability mass for all length $l - t$ paths beginning with $s'$.
\end{enumerate}

The prefix (suffix) term is known in the dynamic programming literature as the forward (backward) message-passing variable, as it functions to pass probability `messages' forward (backward) to (from) the central transition tuple.
Below, we show that for the Maximum Entropy behaviour model, the forward and backward message passing variables exhibit recursive sub-structure, which allows computing them efficiently with a dynamic program.

\subsection{Forward message passing variable}
We assume all trajectories in $\Tau$ have length at most $L$.
We define a forward message-passing variable that computes the partition contribution for paths of length $l$ that \emph{end} at a given state $s$,
\begin{align}
    \alpha_l(s) &\triangleq
    \sum_{\tau \in \Tau : |\tau|=l, \tau_l=s}
    q(\tau)
    e^{R(\tau)}
    & 1 \le l \le L.
\end{align}

Un-rolling this definition (i.e. fixing a base-case and re-writing the recurrence accordingly) leads to an expression for $\alpha_l(s)$,
\begin{align}
    \label{eq:alpha-basecase}
    \alpha_1(s) &= p_0(s) e^{R(s)}
    \\
    \label{eq:alpha-recurrence}
    \alpha_{l+1}(s') &=
    \sum_{(s, a) \in \Parent(s')}
    \alpha_l(s)
    T(s' \mid s, a)
    e^{
        \gamma^{l-1} R(s, a) +
        \gamma^{l-1} R(s, a, s') +
        \gamma^l R(s')
    }
    & 1 \le l < L.
\end{align}

\subsection{Backward message passing variable}
We define an analogous backward message-passing variable which counts the partition contribution for length $t$ suffixes within paths of total length $l$, where the path suffix \emph{starts} at a given state $s$,
\begin{align}
    \beta_{l,t}(s) &\triangleq
    \sum_{\tau\in\Tau:|\tau|=t,\tau_1=(s,\cdot)}
    q'(\tau)
    e^{R(\tau)}
    & 1 \le l \le L, 1 \le t \le l,
\end{align}

\noindent where $q'(\tau) \triangleq \prod_{t=1}^{|\tau|-1}T(s_{t+1}|s_t, a_t)$ is the same as $q(\tau)$, but does not include the starting state distribution.
Un-rolling the definition of $\beta_{l,t}(s)$ gives an analogous recurrence;
{\small
\begin{align}
    \label{eq:beta-basecase}
    \beta_{l,1}(s) &=
    e^{\gamma^{l-1} R(s)}
    \\
    & \qquad\qquad\qquad\qquad\qquad\qquad\qquad\qquad\qquad\qquad\qquad\qquad 1 \le l \le L
    \nonumber
    \\[10pt]
    \label{eq:beta-recurrence}
    \beta_{l,t+1}(s) &=
    \sum_{(a,s') \in \Children(s)}
    T(s' \mid s, a)
    e^{
        \gamma^{l-t-1} R(s)
        + \gamma^{l-t-1} R(s, a)
        + \gamma^{l-t-1} R(s, a, s')
    }
    \beta_{l,t}(s')
    \\
    \nonumber
    & \qquad\qquad\qquad\qquad\qquad\qquad\qquad\qquad\qquad\qquad\qquad\qquad 1 \le l \le L, 1 \le t < l.
\end{align}
}

\subsection{Partition and marginal calculations}
\label{subsec:partition-and-marginal-nopadding}
Now we describe the dynamic program to compute the terms of interest.
For a set of paths of lengths $1 \le l \le L$ the partition function value is given by summing the backward message passing values,
\begin{align}
    \label{eq:partition-variablelength}
    Z(\Theta) = \sum_{l=1}^L \sum_{s \in \States} \alpha_l(s),
\end{align}

\noindent and the marginal distributions are given by,
{\small
    \begin{align}
        p_{\Theta,t}(s) &=
        \frac{1}{Z(\Theta)}
        \alpha_t(s)
        \left(
        1
        + \sum_{(a, s') \in \Children(s)}
        T(s' \mid s, a)
        e^{
            \gamma^{t-1} R(s, a)
            + \gamma^{t-1} R(s, a, s') 
        }
        \sum_{l=t+1}^L
        \beta_{l,l-t}(s')
        \right)
        \nonumber
        \\
        & \qquad\qquad\qquad\qquad\qquad\qquad\qquad\qquad\qquad\qquad\qquad\qquad
        \forall t = 1, \dots, L,
        \label{eq:s-marginals}
        \\
        p_{\Theta,t}(s,a) &=
        \frac{1}{Z(\Theta)}
        \alpha_t(s)
        \sum_{s' \in \States}
        T(s' \mid s, a)
        e^{
            \gamma^{t-1} R(s, a)
            + \gamma^{t-1} R(s, a, s') 
        }
        \sum_{l=t+1}^L
        \beta_{l,l-t}(s')
        \nonumber
        \\
        & \qquad\qquad\qquad\qquad\qquad\qquad\qquad\qquad\qquad\qquad\qquad\qquad
        \forall t = 1, \dots, L-1,
        \label{eq:sa-marginals}
        \\
        p_{\Theta,t}(s,a,s') &=
        \frac{1}{Z(\Theta)}
        \alpha_t(s)
        T(s' \mid s, a)
        e^{
            \gamma^{t-1} R(s, a)
            + \gamma^{t-1} R(s, a, s') 
        }
        \sum_{l=t+1}^L
        \beta_{l,l-t}(s')
        \nonumber
        \\
        &
        \qquad\qquad\qquad\qquad\qquad\qquad\qquad\qquad\qquad\qquad\qquad\qquad
        \forall t = 1, \dots, L-1.
        \label{eq:sas-marginals}
    \end{align}
}
Before continuing, we briefly illustrate the function of these equations with a simple example.

\subsection{A Worked Example}

We return to the example MDP from \cref{fig:linear-mdp} to illustrate our algorithm.
We choose the set $\Tau$ with $1 \le l \le L = 4$, and illustrate our method with the following dataset,
\begin{align}
    \Data = \big\{\qquad&
    \nonumber
    \\
    & ((s_1, \texttt{None})), & (= \tau_1)
    \nonumber
    \\
    & ((s_1, a), (s_2, \texttt{None})), & (= \tau_2)
    \nonumber
    \\
    & ((s_1, a), (s_2, a), (s_3, \texttt{None})), & (= \tau_3)
    \nonumber
    \\
    & ((s_1, a), (s_2, a), (s_3, a), (s_4, \texttt{None})) & (= \tau_4)
    \nonumber
    \\
    \big\}\qquad\qquad&
    \label{eq:example-mdp-path-set}
\end{align}
The forward message passing matrix $\alpha_t(s)$ is given by \cref{eq:alpha-basecase,eq:alpha-recurrence},
\begin{table}[H]
    \centering
    $
    \alpha_l(s) = \begin{cases}
        ~\\
        ~\\
        ~\\
        ~
    \end{cases}
    $
    \hspace{-20pt}
    \resizebox{0.85\textwidth}{!}{
        \begin{tabular}{c|cccc}
            ~ & $l=1$ & $l=2$ & $l=3$ & $l=4$
            \\
            \midrule
            $s_1$ & $e^{R(s_1)}$ & $0$ & $0$ & $0$
            \\
            $s_2$ & $0$ & $\alpha_1(s_1)e^{R(s_1, a) + R(s_1, a, s_2) + \gamma R(s_2)}$ & $0$ & $0$
            \\
            $s_3$ & $0$ & $0$ & $\alpha_2(s_2)e^{\gamma R(s_2, a) + \gamma R(s_2, a, s_3) + \gamma^2 R(s_3)}$ & $0$
            \\
            $s_4$ & $0$ & $0$ & $0$ & $\alpha_3(s_3)e^{\gamma^2 R(s_3, a) + \gamma^2 R(s_3, a, s_4) + \gamma^3 R(s_4)}$
            \\
        \end{tabular}
    }
\end{table}
On the other hand, the backward message passing variable $\beta_{l,t}(s)$ must be computed for $1 \le l \le L = 4$, and for $1 \le t < l$.
These terms are given by \cref{eq:beta-basecase,eq:beta-recurrence}.
For length one paths ($l=1$),
\begin{table}[H]
    \centering
    $
    \beta_{1,t}(s) = \begin{cases}
        ~\\
        ~\\
        ~\\
        ~
    \end{cases}
    $
    \hspace{-20pt}
    \begin{tabular}{c|c}
        ~ & $t=1$
        \\
        \midrule
        $s_1$ & $e^{R(s_1)}$
        \\
        $s_2$ & $e^{R(s_2)}$
        \\
        $s_3$ & $e^{R(s_3)}$
        \\
        $s_4$ & $e^{R(s_4)}$
        \\
    \end{tabular}
\end{table}
For length two paths ($l=2$),
\begin{table}[H]
    \centering
    $
    \beta_{2,t}(s) = \begin{cases}
        ~\\
        ~\\
        ~\\
        ~
    \end{cases}
    $
    \hspace{-20pt}
    \begin{tabular}{c|cc}
        ~ & $t=2$ & $t=1$
        \\
        \midrule
        $s_1$ & $e^{R(s_1) + R(s_1,a) + R(s_1,a,s_2)}\beta_{2,1}(s_2)$ & $e^{\gamma R(s_1)}$
        \\
        $s_2$ & $e^{R(s_2) + R(s_2,a) + R(s_2,a,s_3)}\beta_{2,1}(s_3)$ & $e^{\gamma R(s_2)}$
        \\
        $s_3$ & $e^{R(s_3) + R(s_3,a) + R(s_3,a,s_4)}\beta_{2,1}(s_4)$ & $e^{\gamma R(s_3)}$
        \\
        $s_4$ & $0$ & $e^{\gamma R(s_4)}$
        \\
    \end{tabular}
\end{table}
For length three paths ($l=3$),
\begin{table}[H]
    \centering
    $
    \beta_{3,t}(s) = \begin{cases}
        ~\\
        ~\\
        ~\\
        ~
    \end{cases}
    $
    \hspace{-20pt}
    \begin{tabular}{c|ccc}
        ~ & $t=3$ & $t=2$ & $t=1$
        \\
        \midrule
        $s_1$ & $e^{R(s_1)+R(s_1,a)+R(s_1,a,s_2)}\beta_{3,2}(s_2)$ & $e^{\gamma R(s_1) + \gamma R(s_1,a) + \gamma R(s_1,a,s_2)}\beta_{3,1}(s_2)$ & $e^{\gamma^2 R(s_1)}$
        \\
        $s_2$ & $e^{R(s_2)+R(s_2,a)+R(s_2,a,s_3)}\beta_{3,2}(s_3)$ & $e^{\gamma R(s_2) + \gamma R(s_2,a) + \gamma R(s_2,a,s_3)}\beta_{3,1}(s_3)$ & $e^{\gamma^2 R(s_2)}$
        \\
        $s_3$ & $0$ & $e^{\gamma R(s_3) + \gamma R(s_3,a) + \gamma R(s_3,a,s_4)}\beta_{3,1}(s_4)$ & $e^{\gamma^2 R(s_3)}$
        \\
        $s_4$ & $0$ & $0$ & $e^{\gamma^2 R(s_4)}$
        \\
    \end{tabular}
\end{table}
And for length four paths ($l=4$),
\begin{table}[H]
    \centering
    $
    \beta_{4,t}(s) = \begin{cases}
        ~\\
        ~\\
        ~\\
        ~
    \end{cases}
    $
    \hspace{-20pt}
    \resizebox{0.85\textwidth}{!}{
        \begin{tabular}{c|cccc}
            ~ & $t=4$ & $t=3$ & $t=2$ & $t=1$
            \\
            \midrule
            $s_1$ & $e^{R(s_1) + R(s_1, a) + R(s_1, a, s_2) \beta_{4,3}(s_2)}$ & $e^{\gamma R(s_1) + \gamma R(s_1,a) + \gamma R(s_1,a,s_2)}\beta_{4,2}(s_2)$ & $e^{\gamma^2 R(s_1) + \gamma^2 R(s_1,a) + \gamma^2 R(s_1,a,s_2)}\beta_{4,1}(s_2)$ & $e^{\gamma^3 R(s_1)}$
            \\
            $s_2$ & $0$ & $e^{\gamma R(s_2) + \gamma R(s_2,a) + \gamma R(s_2,a,s_3)}\beta_{4,2}(s_3)$ & $e^{\gamma^2 R(s_2) + \gamma^2 R(s_2,a) + \gamma^2 R(s_2,a,s_3)}\beta_{4,1}(s_3)$ & $e^{\gamma^3 R(s_2)}$
            \\
            $s_3$ & $0$ & $0$ & $e^{\gamma^2 R(s_3) + \gamma^2 R(s_3,a) + \gamma^2 R(s_3,a,s_4)}\beta_{4,1}(s_4)$ & $e^{\gamma^3 R(s_3)}$
            \\
            $s_4$ & $0$ & $0$ & $0$ & $e^{\gamma^3 R(s_4)}$
            \\
        \end{tabular}
    }
\end{table}
We are now able to sum the forward message $\alpha_l(s)$ to compute the partition function value (\cref{eq:partition-variablelength}),
\begin{table}[H]
    \centering
    \resizebox{\textwidth}{!}{
    \begin{tabular}{rllll}
        $t=1$ & $t=2$ & $t=3$ & $t=4$
        \\
        \midrule
        $Z(\Theta) = e^{R(s_1)}$ & & &
        \\
        $+~ e^{R(s_1)}$ & $e^{R(s_1, a) + R(s_1, a, s_2) + \gamma R(s_2)}$ & &
        \\
        $+~ e^{R(s_1)}$ & $e^{R(s_1, a) + R(s_1, a, s_2) + \gamma R(s_2)}$ & $e^{\gamma R(s_2, a) + \gamma R(s_2, a, s_3) + \gamma^2 R(s_3)}$ &
        \\
        $+~ e^{R(s_1)}$ & $e^{R(s_1, a) + R(s_1, a, s_2) + \gamma R(s_2)}$ & $e^{\gamma R(s_2, a) + \gamma R(s_2, a, s_3) + \gamma^2 R(s_3)}$ & $e^{\gamma^2 R(s_3, a) + \gamma^2 R(s_3, a, s_4) + \gamma^3 R(s_4)}$
    \end{tabular}
    }
\end{table}
\vspace{-30pt}
\begin{align}
    ~
    \label{eq:example-partition}
\end{align}
Recalling that $\Tau = \Data$ in our example, and comparing with the set of paths (\cref{eq:example-mdp-path-set}), we can see that the partition value correctly accounts for the contributions of each of the four paths in $\Tau$.
We can then use the message passing variables to compute the marginal terms (\cref{eq:s-marginals,eq:sa-marginals,eq:sas-marginals}),
\begin{align*}
    p_{\Theta,1}(s_1) &= \frac{1}{Z(\Theta)} e^{R(s_1)} \left(
    1 + e^{R(s_1,a) + R(s_1,a,s_2)} \left(
    e^{\gamma R(s_2)} + ({\dots})e^{\gamma^2 R(s_3)} + ({\dots})e^{\gamma^3 R(s_4)}
    \right)
    \right)
    \\
    p_{\Theta,2}(s_2) &= \frac{1}{Z(\Theta)} ({\dots}) e^{\gamma R(s_2)} \left(
    1 + e^{\gamma R(s_2,a) + \gamma R(s_2,a,s_3)} \left(
    e^{\gamma^2 R(s_3)} + ({\dots})e^{\gamma^3 R(s_4)}
    \right)
    \right)
    \\
    p_{\Theta,3}(s_3) &= \frac{1}{Z(\Theta)} ({\dots}) e^{\gamma^2 R(s_3)} \left(
    1 + e^{\gamma^2 R(s_3,a) + \gamma^2 R(s_3,a,s_4)} \left(
    e^{\gamma^3 R(s_4)}
    \right)
    \right)
    \\
    p_{\Theta,4}(s_4) &= \frac{1}{Z(\Theta)} ({\dots}) e^{\gamma^3 R(s_4)} \left( 1 + (0) \right)
    \\[20pt]
    p_{\Theta,1}(s_1, a) &= \frac{1}{Z(\Theta)} e^{R(s_1)} e^{R(s_1,a) + R(s_1,a,s_2)} \left(
    e^{\gamma R(s_2)} + ({\dots}) e^{\gamma^2 R(s_3)} + ({\dots}) e^{\gamma^3 R(s_4)}
    \right)
    \\
    p_{\Theta,2}(s_2, a) &= \frac{1}{Z(\Theta)} ({\dots}) e^{\gamma R(s_2)} e^{\gamma R(s_2, a) + \gamma R(s_2, a, s_3)} \left(
    e^{\gamma^2 R(s_3)} + ({\dots}) e^{\gamma^3 R(s_4)}
    \right)
    \\
    p_{\Theta,3}(s_3, a) &= \frac{1}{Z(\Theta)} ({\dots}) e^{\gamma^2 R(s_3)} e^{\gamma^2 R(s_3, a) + \gamma^2 R(s_3, a, s_4)} \left(
    e^{\gamma^3 R(s_4)}
    \right)
    \\[20pt]
    p_{\Theta,1}(s_1, a, s_2) &= \frac{1}{Z(\Theta)} e^{R(s_1)} e^{R(s_1,a) + R(s_1,a,s_2)} \left(
    e^{\gamma R(s_2)} + ({\dots}) e^{\gamma^2 R(s_3)} + ({\dots}) e^{\gamma^3 R(s_4)}
    \right)
    \\
    p_{\Theta,2}(s_2, a, s_3) &= \frac{1}{Z(\Theta)} ({\dots}) e^{\gamma R(s_2)} e^{\gamma R(s_2, a) + \gamma R(s_2, a, s_3)} \left(
    e^{\gamma^2 R(s_3)} + ({\dots}) e^{\gamma^3 R(s_4)}
    \right)
    \\
    p_{\Theta,3}(s_3, a, s_4) &= \frac{1}{Z(\Theta)} ({\dots}) e^{\gamma^2 R(s_3)} e^{\gamma^2 R(s_3, a) + \gamma^2 R(s_3, a, s_4)} \left(
    e^{\gamma^3 R(s_4)}
    \right)
\end{align*}
where all $t,s,a,s'$ combinations that are not shown are equal to $0$, and intermediate exponential values have been collapsed $({\dots})$ for brevity.
The attentive reader can compare the values computed above with the columns of the partition value calculated in \cref{eq:example-partition} to confirm that the marginals correctly count all path contributions for each $t,s,a,s'$ combination.

\subsection{Pseudo-code listing}

We list the full pseudo-code for this algorithm in \cref{alg:maxent-irl}.
This dynamic program will compute the partition and marginal state values exactly, however requires storing $\beta_{l,t}$ for $1 \le t < l$, $1, \le l \le L$, and has polynomial time complexity $\Oh(|\States|^2 |\Actions| L^2)$, where $L$ is the length of the longest path in the demonstration dataset.
We show in the following section how a padding trick can be used to compute the same results linearly in $L$, and with less space required for the backward message variable $\bm{\beta}$.

\clearpage
\begin{algorithm}[H]
    \caption{ExactMaxEntIRLPoly --- Exact Maximum Entropy Inverse Reinforcement Learning that requires polynomial time and space complexity in the length of the longest demonstration path}
    \label{alg:maxent-irl}
    
    \small
    \DontPrintSemicolon
    
    \SetKwInput{Input}{Input}
    \SetKwInput{Output}{Output}
    \SetKw{Continue}{continue}
    
    \BlankLine
    \Input{%
        $\MDP \backslash R$, $\Data$, $\phi_s, \phi_{sa}, \phi_{sas'}$
    }
    \Output{%
        $\Theta = \{\theta_s, \theta_{sa}, \theta_{sas'}\}$, $Z(\Theta)$
    }
    
    \BlankLine
    $\overline{\phi_s} =
    \frac{1}{|\Data|}
    \sum_{\tau \in \Data}
    \sum_{t=1}^{|\tau|}
    \gamma^{t-1}
    \phi_s(s_t)$
    \tcc*[r]{Compute expert feature expectations}
    $\overline{\phi_{sa}} =
    \frac{1}{|\Data|}
    \sum_{\tau \in \Data}
    \sum_{t=1}^{|\tau|-1}
    \gamma^{t-1}
    \phi_{sa}(s_t, a_t)$
    \;
    $\overline{\phi_{sas'}} =
    \frac{1}{|\Data|}
    \sum_{\tau \in \Data}
    \sum_{t=1}^{|\tau|-1}
    \gamma^{t-1}
    \phi_{sas'}(s_t, a_t, s_{t+1})$ \;
    
    \BlankLine
    $L = \max_{\tau \in \Data} |\tau|$
    \tcc*[r]{Measure longest demonstration path}
    
    \BlankLine
    $\theta_s = \theta_{sa} = \theta_{sas'} = 0$
    \tcc*[r]{Begin gradient ascent Loop}
    \While{not converged}{
        \BlankLine
        $\alpha_1(s) = p_0(s) \exp(R(s))$
        \tcc*[r]{Forward message pass}
        \For{$t \gets 1 \dots L-1$}{
            $
            \alpha_{t+1}(s') = \sum_{(s, a) \in \Parent(s')} \alpha_t(s) T(s' \mid s, a)
            \exp (
            \gamma^{l-1} R(s, a) +
            \gamma^{l-1} R(s, a, s') +
            \gamma^l R(s')
            )
            $ \;
        }
        
        \BlankLine
        \For{$l \gets 1 \dots L$}{
            $\beta_{l,1}(s) = \exp(\gamma^{l-1} R(s))$
            \tcc*[r]{Backward message pass}
            \For{$t \gets 1 \dots l - 1$}{
                $
                \beta_{l,t+1}(s) = \sum_{(a,s') \in \Children(s)}
                T(s' \mid s, a)
                \exp(\gamma^{l-t-1}(R(s) + R(s,a) + R(s,a,s')))
                \beta_{l,t}(s')
                $ \;
            }
        }
        
        \BlankLine
        $Z(\Theta) = \sum_{l=1}^L \sum_{s \in \States} \alpha_l(s)$
        \tcc*[r]{Compute partition value}
        
        \BlankLine
        \BlankLine
        $p_{\Theta, L}(s) = \frac{1}{Z(\Theta)} \alpha_L(s)$
        \tcc*[r]{Compute marginal values}
        \For{$t \gets 1 \dots L-1$}{
            {\scriptsize
                $
                p_{\Theta, t}(s) = \frac{1}{Z(\Theta)}
                \alpha_t(s)
                \left(
                1 + 
                \sum_{(a, s') \in \Children(s)}
                T(s' \mid s, a)
                \exp(\gamma^{t-1} R(s, a) + \gamma^{t-1} R(s,a,s'))
                \sum_{l=t+1}^L
                \beta_{l,l-t}(s')
                \right)
                $
            }
            
            \BlankLine
            {\footnotesize
                $
                p_{\Theta,t}(s, a) = \frac{1}{Z(\Theta)}
                \alpha_t(s)
                \sum_{s' \in \States}
                T(s' \mid s, a)
                \exp(\gamma^{t-1} R(s, a) + \gamma^{t-1} R(s,a,s'))
                \sum_{l=t+1}^L
                \beta_{l,l-t}(s')
                $
            } \;
            
            \BlankLine
            $
            p_{\Theta,t}(s, a, s') = \frac{1}{Z(\Theta)}
            \alpha_t(s)
            T(s' \mid s, a)
            \exp(\gamma^{t-1} R(s, a) + \gamma^{t-1} R(s,a,s'))
            \sum_{l=t+1}^L
            \beta_{l,l-t}(s')
            $ \;
        }
        
        \BlankLine
        \BlankLine
        $\nabla_{\theta_s} =
        \overline{\phi_s} -
        \sum_{s \in \States}
        \phi_s(s)
        \sum_{t=1}^L
        p_{\Theta,t}(s)$
        \tcc*[r]{Compute gradients}
        
        $\nabla_{\theta_{sa}} =
        \overline{\phi_{sa}} -
        \sum_{s \in \States}
        \sum_{a \in \Actions}
        \phi_{sa}(s, a)
        \sum_{t=1}^{L-1}
        p_{\Theta,t}(s, a)$ \;
        
        $
        \nabla_{\theta_{sas'}} =
        \overline{\phi_{sas'}} -
        \sum_{s \in \States}
        \sum_{a \in \Actions}
        \sum_{s' \in \States}
        \phi_{sas'}(s, a, s')
        \sum_{t=1}^{L-1}
        p_{\Theta,t}(s, a, s')
        $ \;
        
        \BlankLine
        \BlankLine
        Update $\theta_s, \theta_{sa}, \theta_{sas'}$ using $\nabla_{\theta_s}, \nabla_{\theta_{sa}}, \nabla_{\theta_{sas'}}$ with chosen optimizer. \;
    }
    \Return{
        $\Theta = \{\theta_s, \theta_{sa}, \theta_{sas'}\}$ and $Z(\Theta)$
    }
\end{algorithm}

\section{A More Efficient Algorithm}

We now introduce a way to augment episodic and continuing MDPs (and their associated datasets of IRL demonstrations) so that all demonstrations will be of the same length, but the reward parameters learned using our Maximum Entropy IRL algorithm are unchanged.
This augmentation (a so-called `padding trick') has the effect of transforming an episodic MDP to a continuous MDP in a way which means that the demonstration trajectories from agents can all be `extended' until they all reach some upper length $L$.
This is done by adding a new state and action to the MDP, which form a recurrent sub-set of the state-action space of the MDP -- and by updating the transition dynamics and reward structure so that the corresponding Maximum Entropy probability distribution over trajectories is unchanged.
Using this approach we are able to transform the dataset of trajectories of varying lengths to a dataset of trajectories of a single fixed size -- which allows a reducing the computational complexity of our MaxEnt IRL dynamic program without changing the value of the calculated partition function or marginals.

\subsection{A padding trick for episodic and continuing MDPs}
\label{subsec:padding-trick}

Specifically, we augment the MDP by introducing an auxiliary state $s_a$ and action $a_a$.
To keep the derivation clear, we incorporate these elements into our existing notation as follows:
\begin{align}
    s_a &\notin \States^T, s_a \notin \States, a_a \notin \Actions
    \\
    \States^+ &\triangleq \States \cup \{s_a\}
    \\
    \Actions^+ &\triangleq \Actions \cup \{a_a\}
\end{align}
We illustrate the hierarchy of state and action sets in \cref{fig:state-classes,fig:action-classes}.

\begin{figure*}[t]
    \centering
    \begin{subfigure}[]{0.45\textwidth}
        \centering
        \includegraphics[width=\textwidth]{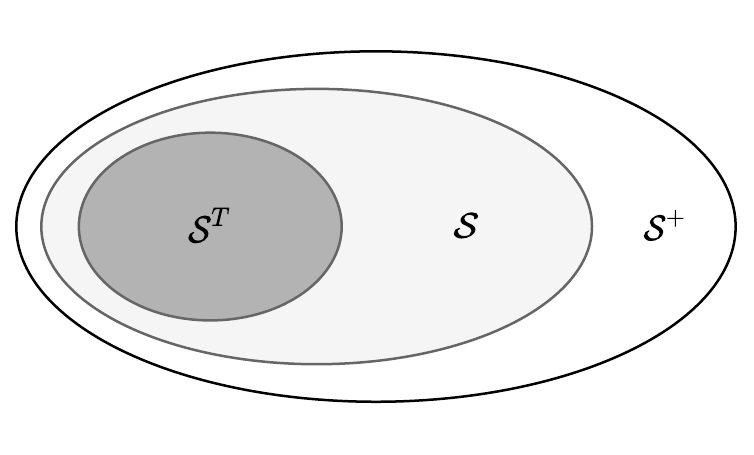}
        \caption{}
        \label{fig:state-classes}
    \end{subfigure}
    \hfill
    \begin{subfigure}[]{0.45\textwidth}
        \centering
        \includegraphics[width=\textwidth]{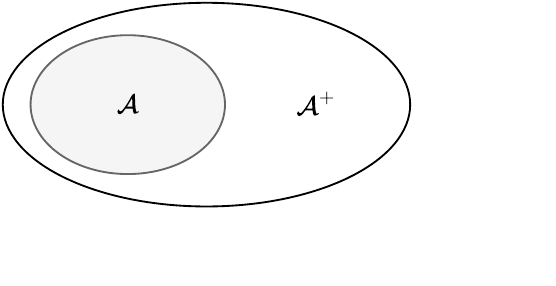}
        \caption{}
        \label{fig:action-classes}    
    \end{subfigure}
    \caption{%
        The hierarchy of state and action classes used in our notation for MDPs that have been augmented with our padding trick.
        \Nb{} $\States^+ - \States$ is defined as containing only a single element - the auxiliary state $s_a$.
        Likewise, $\Actions^+ - \Actions \triangleq \{a_a\}$.
    }
\end{figure*}

Our padding method requires that the auxiliary state and action satisfy the following properties \vav{} the dynamics of the augmented MDP;

\begin{enumerate}
    \item The agent may not start in the auxiliary state $s_a$
    \begin{align}
        p_0(s_a) \triangleq 0
    \end{align}
    \item The auxiliary state $s_a$ is absorbing
    \begin{align}
        p(s' \mid s_a, a) &\triangleq \begin{cases}
            1 & s' = s_a
            \\
            0 & \text{else}
        \end{cases}
        & \forall a \in \Actions^+
    \end{align}
    \item The auxiliary action always transitions deterministically to the auxiliary state
    \begin{align}
        p(s' \mid s, a_a) &\triangleq \begin{cases}
            1 & s' = s_a
            \\
            0 & \text{else}
        \end{cases}
        & \forall s \in \States^+
    \end{align}
    \item Terminal states transition to the auxiliary state no matter what action is taken
    \begin{align}
        p(s' \mid s, a) &\triangleq \begin{cases}
            1 & s' = s_a
            \\
            0 & \text{else}
        \end{cases}
        & \forall s \in \States^T, \forall a \in \Actions^+
    \end{align}
\end{enumerate}

These rules also imply the following updates to the \emph{Child set} and \emph{Parent set} operators.

\begin{itemize}
    \setcounter{enumi}{4}
    \item \emph{All} states (including terminal states and the auxiliary state) now feature the auxiliary action and state state in their children set
    \begin{align}
        \implies \Children(s) &\supseteq \{(a_a, s_a)\} & \forall s \in \States^+.
    \end{align}
    
    \item The auxiliary state contains all states (including itself) in it's parent set
    \begin{align}
        \implies \Parent(s_a) &\triangleq \{(s, a_a) : \forall s \in \States^+ \}.
    \end{align}

    \item The auxiliary state contains only the auxiliary action and state in it's child set
    \begin{align}
        \implies \Children(s_a) &\triangleq \{(a_a, s_a)\}.
    \end{align}

    \item Terminal states now have a child set spanning each of the set of all actions, followed by the auxiliary state
    \begin{align}
        \implies \Children(s) &\triangleq \{(a, s_a) : \forall a \in \Actions^+ \} & \forall s \in \States^T
    \end{align}
\end{itemize}

\Eg{} returning to the example of the linear MDP from \cref{fig:linear-mdp}, the updated MDP transition structure is as follows (note that $s_4$, which was formerly terminal, now has a successor state -- $s_a$);

\begin{figure*}[h]
    \centering
    \includegraphics[width=0.6\linewidth]{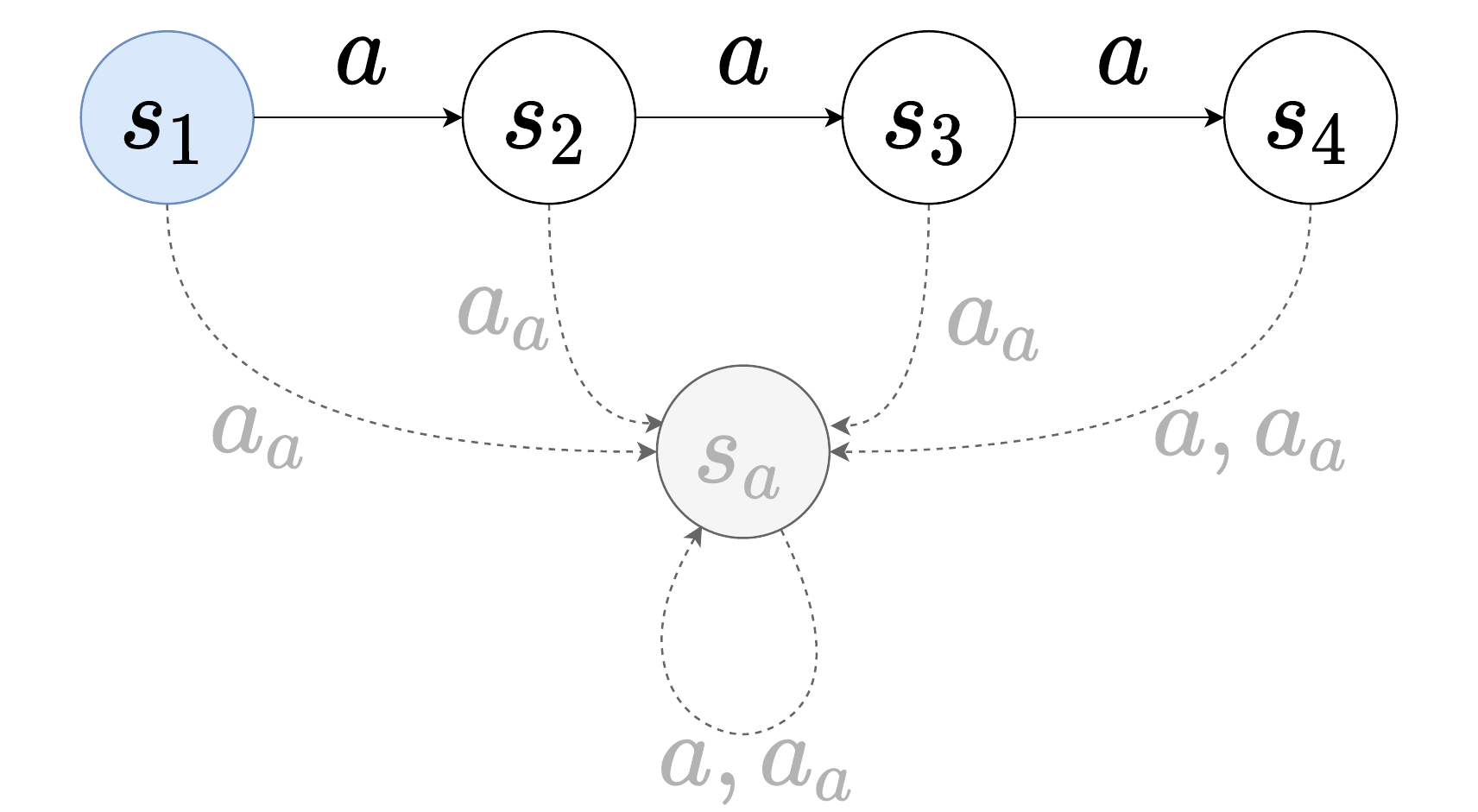}
    \caption{%
        The linear MDP from \cref{fig:linear-mdp}, after augmentation with the padding trick.
        Elements added as part of the padding trick are indicated in dashed lines and/or grey shading.
        Note that $s_4$ is no longer a terminal state.
    }
    \label{fig:linear-mdp-padded}
\end{figure*}

To complete the padding trick we must update the reward function in a way that will not modify the reward which is learned under the Maximum Entropy IRL model.
The requisite reward function changes are outlined in \cref{tab:padded-s-rewards,tab:padded-sa-rewards,tab:padded-sas-rewards}.
Essentially, these changes serve to infinitely discourage any actions that were impossible in the original MDP (e.g. executing actions in $\Actions$ after a terminal state), however allow the agent to transition to the auxiliary state at any point in time without incurring any modification to their gained reward.

\begin{table}[t]
    \renewcommand{\arraystretch}{1.3}
    \centering
    
    \begin{subtable}[l]{0.45\textwidth}
        \begin{center}
            \caption{$R(s)$}
            \label{tab:padded-s-rewards}
            \begin{tabular}{r|c}
                State & $R(s)$
                \\
                \midrule
                $s \in \States^T$ & 0
                \\
                $s \notin \States^T, s \ne s_a$ & $\theta_s^\top \phi(s)$
                \\
                $s = s_a$ & 0
            \end{tabular}
        \end{center}
    \end{subtable}
    \hfill
    \begin{subtable}[r]{0.45\textwidth}
        \begin{center}
            \caption{$R(s, a)$}
            \label{tab:padded-sa-rewards}
            \begin{tabular}{r|cccc}
                {} & \multicolumn{4}{c}{Action}
                \\
                \cline{3-4}
                State && $a \ne a_a$ & $a_a$ &
                \\
                \midrule
                $s \in \States^T$ && $-\infty$ & 0 &
                \\
                $s \notin \States^T, s \ne s_a$ && $\theta_{sa}^\top \phi(s, a)$ & 0 &
                \\
                $s = s_a$ && $-\infty$ & 0 &
            \end{tabular}
        \end{center}
    \end{subtable}
    \vspace{20pt}
    \begin{subtable}[h]{\textwidth}
        \begin{center}
            \caption{$R(s, a, s')$}
            \label{tab:padded-sas-rewards}
            \resizebox{\textwidth}{!}{%
                \begin{tabular}{rr|cccc|cccc}
                    {} & {} & \multicolumn{8}{c}{Action}
                    \\
                    \cline{4-9}
                    {} & {} && \multicolumn{3}{c|}{$a \ne a_a$} & \multicolumn{3}{c}{$a_a$} &
                    \\
                    \midrule
                    ~ & Second state && $s' \in \States^T$ & $s' \notin \States^T, s' \ne s_a$ & $s' = s_a$ & $s' \in \States^T$ & $s' \notin \States^T, s' \ne s_a$ & $s' = s_a$ &
                    \\
                    \midrule
                    \multirow{3}{*}{\rotatebox[origin=c]{90}{First state}}  & $s \in \States^T$ && $-\infty$ & $-\infty$ & $-\infty$ & $-\infty$ & $-\infty$ & 0 &
                    \\
                    & $s \notin \States^T, s \ne s_a$  && $\theta_{sas'}^\top \phi(s, a, s')$ & $\theta_{sas'}^\top \phi(s, a, s')$ & $-\infty$ & $-\infty$ & $-\infty$ & 0 &
                    \\
                    & $s = s_a$  && $-\infty$ & $-\infty$ & $-\infty$ & $-\infty$ & $-\infty$ & 0 &
                \end{tabular}
            }
        \end{center}
    \end{subtable}

    \caption{%
        Reward modifications for an MDP that has been augmented with the padding trick.
        (a) Modification for state-based reward function $R(s)$, (b) Modification for state-action based reward function $R(s, a)$, (c) Modification for state-action-state based reward function $R(s,a,s')$.
    }
    
\end{table}

Once the MDP definition has been updated to incorporate the auxiliary state and action, we can adjust the demonstration dataset to allow for a more efficient MaxEnt IRL algorithm, while still computing exact gradients.
The updates required for the demonstration trajectories are as follows.

For all sequences shorter than the longest demonstration path length $L = \max_{\tau \in \Data} |\tau|$, we pad them with auxiliary actions and states $((\cdot, a_a), (s_a, \cdot))$ until the sequence length is $L$.
For example, a $|\tau|=2$ sequence would be padded to length $L=4$ as follows;
\begin{align*}
    &((s_1, a_1), (s_2, \texttt{None}))
    \\
    \implies &((s_1, a_1), (s_2, \textcolor{lightgray}{a_a), (s_a, a_a), (s_a,} \texttt{None})).
\end{align*}
Once all demonstrations in the data $\Data$ are padded to the same length $L$, we can apply a simplified forward-backward algorithm to calculate the partition value exactly, but with better computational space and time complexity.
We describe this algorithm now.

\subsection{Padded message passing variables}

The forward message passing variable $\alpha_l(s)$ is computed as before, for all lengths $1 \le l < L$ and for states $s \in \States$ (\cref{eq:alpha-basecase,eq:alpha-recurrence}) -- \nb{} we do not need to bother computing $\alpha_l(s_a)$ because these terms are not needed in the partition and marginal calculations.

The backward message passing variable $\beta_{l,t}(s)$ still needs to be computed for suffix lengths $1 \le t < l$, however due to the padded sequences, we can fix $l = L$, removing one level of iteration.
We therefore drop the $l$ prefix and denote this term with the $t$ prefix only -- \ie{} $\beta_t(s)$, and compute this for states $s \in \States^+$. \Nb{} we \emph{will} need the terms $\beta_t(s_a)$, as they appear within the backward message recurrence and marginal calculations, however a simple inspection of \cref{tab:padded-s-rewards,tab:padded-sa-rewards,tab:padded-sas-rewards} shows that $\beta_t(s_a) = 1 ~ \forall t$.
The backward message recurrence is thus given by:
\begin{align}
    \label{eq:beta-basecase-padded}
    \beta_{1}(s) &= e^{\gamma^{L-1} R(s)}
    \\
    \label{eq:beta-recurrence-padded}
    \beta_{t+1}(s) &= \sum_{(a,s') \in \Children(s)}
    T(s' \mid s, a)
    e^{
        \gamma^{L-t-1} R(s)
        + \gamma^{L-t-1} R(s, a)
        + \gamma^{L-t-1} R(s, a, s')
    }
    \beta_{t}(s')
    & 1 \le t < L.
\end{align}

\subsection{Padded partition and marginal calculations}

With the padded MDP formulation, the partition function is unchanged (\cref{eq:partition-variablelength}), however we note that the inner summand is over the set $\States$, which does \emph{not} include the auxiliary state, but \emph{does} include states that were terminal states before the padding trick was applied.

The update for state marginal distributions no longer needs a summand over variable suffix lengths, thus reducing the time complexity.
That is,
\begin{align}
    \label{eq:s-marginals-padded}
    p_{\Theta,t}(s) &= \begin{cases}
        \frac{1}{Z(\Theta)}
        \alpha_t(s) 
        & t = L
        \\
        \frac{1}{Z(\Theta)}
        \alpha_t(s)
        \sum_{(a, s') \in \Children(s)}
        T(s' \mid s, a)
        e^{
            \gamma^{t-1} R(s, a)
            + \gamma^{t-1} R(s, a, s')
        }
        \beta_{L-t}(s')
        & 1 \le t < L,
    \end{cases}
    \\
    \label{eq:sa-marginals-padded}
    p_{\Theta,t}(s,a) &= 
    \frac{1}{Z(\Theta)}
    \alpha_t(s)
    \sum_{s' \in \States}
    T(s' \mid s, a)
    e^{
        \gamma^{t-1} R(s, a)
        + \gamma^{t-1} R(s, a, s')
    }
    \beta_{L-t}(s')
    \qquad\quad~~~ 1 \le t < L,
    \\
    \label{eq:sas-marginals-padded}
    p_{\Theta,t}(s,a,s') &= 
    \frac{1}{Z(\Theta)}
    \alpha_t(s)
    T(s' \mid s, a)
    e^{
        \gamma^{t-1} R(s, a)
        + \gamma^{t-1} R(s, a, s')
    }
    \beta_{L-t}(s')
    \qquad\qquad\qquad 1 \le t < L,
\end{align}

\noindent which must be computed for states $s \in \States, a \in \Actions, s' \in \States$ \ie{} everything but the auxiliary state and action.
We also draw the reader's attention to the fact that the summand over child tuples in \cref{eq:s-marginals-padded} \emph{does} include auxiliary action and state tuples, while the state summand in \cref{eq:sa-marginals-padded} does \emph{not} include auxiliary states.

We now briefly return to the example MDP to illustrate the consistency of the two dynamic programs.

\subsection{A Worked Example Revisited}

We return to the example MDP from \cref{fig:linear-mdp,fig:linear-mdp-padded} to demonstrate that the dynamic program with the padding trick faithfully computes the same values as the full dynamic program, while requiring less storage and time.

The path set $\Tau$ still consists of four paths, however the paths are now padded to be of equal length as follows;
\begin{align}
    \Tau = \big\{\qquad&
    \nonumber
    \\
    & ((s_1, \textcolor{lightgray}{a_a), (s_a, a_a), (s_a, a_a), (s_a,} \texttt{None})), & (= \tau_1)
    \nonumber
    \\
    & ((s_1, a), (s_2, \textcolor{lightgray}{a_a), (s_a, a_a), (s_a,} \texttt{None})), & (= \tau_2)
    \nonumber
    \\
    & ((s_1, a), (s_2, a), (s_3, \textcolor{lightgray}{a_a), (s_a,} \texttt{None})), & (= \tau_3)
    \nonumber
    \\
    & ((s_1, a), (s_2, a), (s_3, a), (s_4, \texttt{None})) & (= \tau_4)
    \nonumber
    \\
    \big\}\qquad\qquad&
    \label{eq:example-mdp-path-set-padded}
\end{align}

The recurrence for the forward message (\cref{eq:alpha-recurrence}) sums over parents of states $s \in \States$, which is exclusive of the auxiliary state $s_a$, therefore the computed forward message values are unchanged.

The backward message $\beta_t(s)$ is computed for $s \in \States^+$, as follows;
\begin{table}[H]
    \centering
    \hspace{-10pt}
    $
    \beta_t(s) = \begin{cases}
        ~\\
        ~\\
        ~\\
        ~
    \end{cases}
    $
    \hspace{-20pt}
    \resizebox{1.1\textwidth}{!}{
        \begin{tabular}{c|cccc}
            ~ & $t=4$ & $t=3$ & $t=2$ & $t=1$
            \\
            \midrule
            $s_1$ 
            & $e^{R(s_1) + R(s_1, a) + R(s_1, a, s_2) \beta_3(s_2)} + e^{R(s_1)}$
            & $e^{\gamma R(s_1) + \gamma R(s_1,a) + \gamma R(s_1,a,s_2)}\beta_2(s_2) + e^{\gamma R(s_1)}$
            & $e^{\gamma^2 R(s_1) + \gamma^2 R(s_1,a) + \gamma^2 R(s_1,a,s_2)}\beta_1(s_2) + e^{\gamma^2 R(s_1)}$
            & $e^{\gamma^3 R(s_1)}$
            \\
            $s_2$ 
            & $e^{R(s_2) + R(s_2, a) + R(s_2, a, s_3) \beta_3(s_3)} + e^{R(s_2)}$
            & $e^{\gamma R(s_2) + \gamma R(s_2,a) + \gamma R(s_2,a,s_3)}\beta_2(s_3) + e^{\gamma R(s_2)}$
            & $e^{\gamma^2 R(s_2) + \gamma^2 R(s_2,a) + \gamma^2 R(s_2,a,s_3)}\beta_1(s_3) + e^{\gamma^2 R(s_2)}$
            & $e^{\gamma^3 R(s_2)}$
            \\
            $s_3$ 
            & $e^{R(s_3) + R(s_3, a) + R(s_3, a, s_4)} + e^{R(s_3)}$
            & $e^{\gamma R(s_3) + \gamma R(s_3,a) + \gamma R(s_3,a,s_4)} + e^{\gamma R(s_3)}$
            & $e^{\gamma^2 R(s_3) + \gamma^2 R(s_3,a) + \gamma^2 R(s_3,a,s_4)}\beta_1(s_4) + e^{\gamma^2 R(s_3)}$ 
            & $e^{\gamma^3 R(s_3)}$
            \\
            $s_4$ & $1$ & $1$ & $1$ & $e^{\gamma^3 R(s_4)}$
            \\
            $s_a$ & $1$ & $1$ & $1$ & $1$
        \end{tabular}
    }
\end{table}

As the forward message is unchanged, the calculated value for the partition function will also be unchanged.

Finally, we compute the marginal terms for $s \in \States, a \in \Actions, s' \in \States$ as follows,
\begin{align*}
    p_{\Theta,1}(s_1) &= \frac{1}{Z(\Theta)} e^{R(s_1)} \left(
    ({\dots})e^{\gamma R(s_2)} \left(
    ({\dots}) e^{\gamma^2 R(s_3)} \left(
    ({\dots})e^{\gamma^3 R(s_4)}
    + 1
    \right)
    + 1
    \right)
    + 1
    \right)
    \\
    p_{\Theta,2}(s_2) &= \frac{1}{Z(\Theta)} ({\dots}) e^{\gamma R(s_2)} \left(
    e^{\gamma R(s_2, a) + \gamma R(s_2, a, s_3)} \left(
    ({\dots}) e^{\gamma^3 R(s_4)}
    + e^{\gamma^2 R(s_3)}
    \right)
    + 1
    \right)
    \\
    p_{\Theta,3}(s_3) &= \frac{1}{Z(\Theta)} ({\dots}) e^{\gamma^2 R(s_3)} \left(
    ({\dots}) e^{\gamma^3 R(s_4)} + 1
    \right)
    \\
    p_{\Theta,4}(s_4) &= \frac{1}{Z(\Theta)} ({\dots}) e^{\gamma^3 R(s_4)}
    \\[20pt]
    p_{\Theta,1}(s_1, a) &= \frac{1}{Z(\Theta)} e^{R(s_1)} \left(
    ({\dots}) e^{\gamma R(s_2)} \left(
    ({\dots}) e^{\gamma^2 R(s_3)} \left(
    ({\dots}) e^{\gamma^3 R(s_4)}
    + 1
    \right)
    + 1
    \right)
    \right)
    \\
    p_{\Theta,2}(s_2, a) &= \frac{1}{Z(\Theta)} ({\dots}) e^{\gamma R(s_2)} \left(
    ({\dots}) e^{\gamma^2 R(s_3)} \left(
    ({\dots}) e^{\gamma^3 R(s_4)}
    + 1
    \right)
    \right)
    \\
    p_{\Theta,3}(s_3, a) &= \frac{1}{Z(\Theta)} ({\dots}) e^{\gamma^2 R(s_3)} \left(
    ({\dots}) e^{\gamma^3 R(s_4)}
    \right)
    \\[20pt]
    p_{\Theta,1}(s_1, a, s_2) &= \frac{1}{Z(\Theta)} e^{R(s_1)} \left(
    ({\dots}) e^{\gamma R(s_2)} \left(
    ({\dots}) e^{\gamma^2 R(s_3)} \left(
    ({\dots}) e^{\gamma^3 R(s_4)}
    + 1
    \right)
    + 1
    \right)
    \right)
    \\
    p_{\Theta,2}(s_2, a, s_3) &= \frac{1}{Z(\Theta)} ({\dots}) e^{\gamma R(s_2)} \left(
    ({\dots}) e^{\gamma^2 R(s_3)} \left(
    ({\dots}) e^{\gamma^3 R(s_4)}
    + 1
    \right)
    \right)
    \\
    p_{\Theta,3}(s_3, a, s_4) &= \frac{1}{Z(\Theta)} ({\dots}) e^{\gamma^2 R(s_3)} \left(
    ({\dots}) e^{\gamma^3 R(s_4)}
    \right)
\end{align*}
\noindent where all $t,s,a,s'$ combinations that are not shown are equal to $0$, and once again the intermediate exponential values have been collapsed $({\dots})$ for brevity.
After re-arranging terms, the reader can verify that the computed marginal terms are indeed identical to those values calculated using the original dynamic program, thus concluding our demonstration.

\subsection{Pseudo-code listing}
\label{subsec:pseudocode-listing}

We list the full pseudo-code for the algorithm incorporating the padding trick in \cref{alg:maxent-irl-padded}.

\clearpage
\begin{algorithm}[H]
    \caption{ExactMaxEntIRL --- Exact Maximum Entropy Inverse Reinforcement Learning with Padded MDP Dynamics}
    \label{alg:maxent-irl-padded}
    
    \small
    \DontPrintSemicolon
    
    \SetKwInput{Input}{Input}
    \SetKwInput{Output}{Output}
    \SetKw{Continue}{continue}
    
    \BlankLine
    \Input{%
        $\MDP \backslash R$, $\Data$, $\phi_s, \phi_{sa}, \phi_{sas'}$
    }
    \Output{%
        $\Theta = \{\theta_s, \theta_{sa}, \theta_{sas'}\}$, $Z(\Theta)$
    }
    
    \BlankLine
    $\overline{\phi_s} =
    \frac{1}{|\Data|}
    \sum_{\tau \in \Data}
    \sum_{t=1}^{|\tau|}
    \gamma^{t-1}
    \phi_s(s_t)$
    \tcc*[r]{Compute expert feature expectations}
    $\overline{\phi_{sa}} =
    \frac{1}{|\Data|}
    \sum_{\tau \in \Data}
    \sum_{t=1}^{|\tau|-1}
    \gamma^{t-1}
    \phi_{sa}(s_t, a_t)$
    \;
    $\overline{\phi_{sas'}} =
    \frac{1}{|\Data|}
    \sum_{\tau \in \Data}
    \sum_{t=1}^{|\tau|-1}
    \gamma^{t-1}
    \phi_{sas'}(s_t, a_t, s_{t+1})$ \;
    
    \BlankLine
    $L = \max_{\tau \in \Data} |\tau|$
    \tcc*[r]{Measure longest demonstration path}
    
    \BlankLine
    Update $\MDP$ definition with auxiliary state $s_a$ and action $a_a$
    \tcc*[r]{Apply padding trick}
    Pad demonstration data $\Data$ with $((\cdot, a_a), (s_a, \cdot))$ sequences until all paths are of length $L$.
    
    \BlankLine
    $\theta_s = \theta_{sa} = \theta_{sas'} = 0$
    \tcc*[r]{Begin gradient ascent Loop}
    \While{not converged}{
        \BlankLine
        $\alpha_1(s) = p_0(s) \exp(R(s))$
        \tcc*[r]{Forward message pass}
        \For{$t \gets 1 \dots L-1$}{
            $
            \alpha_{t+1}(s') = \sum_{(s, a) \in \Parent(s')} \alpha_t(s) T(s' \mid s, a)
            \exp (
            \gamma^{l-1} R(s, a) +
            \gamma^{l-1} R(s, a, s') +
            \gamma^l R(s')
            )
            $ \;
        }
        
        \BlankLine
        $\beta_{1}(s) = \exp(\gamma^{L-1} R(s))$
        \tcc*[r]{Backward message pass}
        \For{$t \gets 1 \dots L-1$}{
            $
            \beta_{t+1}(s) = \sum_{(a,s') \in \Children(s)} T(s' \mid s, a)
            \exp(\gamma^{L-t-1}(R(s) + R(s,a) + R(s,a,s')))
            \beta_{t}(s')
            $ \;
        }
        
        \BlankLine
        $Z(\Theta) = \sum_{l=1}^L \sum_{s \in \States} \alpha_l(s)$
        \tcc*[r]{Compute partition value}
        
        \BlankLine
        \BlankLine
        $p_{\Theta, L}(s) = \frac{1}{Z(\Theta)} \alpha_L(s)$
        \tcc*[r]{Compute marginal values}
        \For{$t \gets 1 \dots L-1$}{
            $
            p_{\Theta, t}(s) = \frac{1}{Z(\Theta)}
            \alpha_t(s)
            \sum_{(a, s') \in \Children(s)}
            T(s' \mid s, a)
            \exp(\gamma^{t-1} R(s, a) + \gamma^{t-1} R(s,a,s'))
            \beta_{L-t}(s')
            $
            
            \BlankLine
            $
            p_{\Theta,t}(s, a) = \frac{1}{Z(\Theta)}
            \alpha_t(s)
            \sum_{s' \in \States}
            T(s' \mid s, a)
            \exp(\gamma^{t-1} R(s, a) + \gamma^{t-1} R(s,a,s'))
            \beta_{L-t}(s')
            $ \;
            
            \BlankLine
            $
            p_{\Theta,t}(s, a, s') = \frac{1}{Z(\Theta)}
            \alpha_t(s)
            T(s' \mid s, a)
            \exp(\gamma^{t-1} R(s, a) + \gamma^{t-1} R(s,a,s'))
            \beta_{L-t}(s')
            $ \;
        }
        
        \BlankLine
        \BlankLine
        $\nabla_{\theta_s} =
        \overline{\phi_s} -
        \sum_{s \in \States}
        \phi_s(s)
        \sum_{t=1}^L
        p_{\Theta,t}(s)$
        \tcc*[r]{Compute gradients}
        
        $\nabla_{\theta_{sa}} =
        \overline{\phi_{sa}} -
        \sum_{s \in \States}
        \sum_{a \in \Actions}
        \phi_{sa}(s, a)
        \sum_{t=1}^{L-1}
        p_{\Theta,t}(s, a)$ \;
        
        $
        \nabla_{\theta_{sas'}} =
        \overline{\phi_{sas'}} -
        \sum_{s \in \States}
        \sum_{a \in \Actions}
        \sum_{s' \in \States}
        \phi_{sas'}(s, a, s')
        \sum_{t=1}^{L-1}
        p_{\Theta,t}(s, a, s')
        $ \;
        
        \BlankLine
        \BlankLine
        Update $\theta_s, \theta_{sa}, \theta_{sas'}$ using $\nabla_{\theta_s}, \nabla_{\theta_{sa}}, \nabla_{\theta_{sas'}}$ with chosen optimizer. \;
    }
    \Return{
        $\Theta = \{\theta_s, \theta_{sa}, \theta_{sas'}\}$ and $Z(\Theta)$
    }
\end{algorithm}

This algorithm computes the same (exact) gradients as the basic algorithm described above in \cref{subsec:partition-and-marginal-nopadding}, however has linear time complexity in the size of the longest demonstration path $\Oh(|\States|^2 |\Actions| L)$.

\section{Implementation Comments}

A \naive{} implementation of \emph{any} MaxEnt algorithm may exhibit numerical floating-point overflow due to repeated exponentiation of rewards, especially for large positive reward values and/or long trajectories.
This can be avoided by using (natural) log-space variables and the standard `$\log$-$\zum$-$\exp$' transform when implementing the algorithm.
\Eg{}
\begin{align}
    y &= \log \sum_i \exp(x_i)
    \quad\Longleftrightarrow\quad
    y = c + \log \sum_i \exp(x_i - c),
    \\
    \text{where } c &= \max_{i} x_i.
    \nonumber
\end{align}
Finally, note that with appropriate modifications to the children and parent set operators $\Children(s)$ and $\Parent(s)$, our algorithms are also able to generalize to MDPs with state-dependent action sets $\Actions = \bigcup_{s \in \States} \Actions(s)$.

We provide an optimized reference implementation of this algorithm as a Python 3.6.9 package at our open-source code repository\footnote{\url{https://github.com/aaronsnoswell/unimodal-irl}}.
Our implementation utilizes the Numba Just-In-Time optimizing compiler \parencite{Lam2015} to achieve highly performant vectorized machine code for critical functions.

\section{Inference with the Maximum Entropy Behaviour Distribution}
\label{sec:me-inference}

After reward learning (\ie{} discovering the parameters for $p_\Theta(\tau)$), the maximum likelihood path between two states (or state-distributions) can be found using a Viterbi type dynamic program that has polynomial time complexity. 
If we denote the $i$-th state within a trajectory as $\tau^{(i)}$, and use $i = -1$ to denote the final state of a trajectory, this corresponds to solving the following optimization problem,
\begin{align}
    \argmax_{\tau \in \Tau} ~ &p_{\Theta}(\tau)
    \\
    s.t. \qquad
    p(\tau^{(1)} = s) &= f(s) & \forall s \in \States
    \\
    p(\tau^{(-1)} = s) &= g(s) & \forall s \in \States
\end{align}
\noindent for given (possibly degenerate) distributions $f$ and $g$.
In the special case when the learned weights are such that all $(s, a)$ choices incur a reward less than or equal to zero, any weighted shortest path search algorithm can be used (\eg{} Dijkstra's or Bellman-Ford), reducing the complexity for the problem of path inference conditioned on states.
We omit these algorithm for brevity, but refer the reader to our project repository.

On the other hand, we may wish use the learned maximum entropy path distribution to perform state inference, conditioned on partial paths.
\Eg{} \citet{Ziebart2008} show how Bayes' theorem can be applied to elegantly infer a distribution over destination states given an observation of the first few $(s, a)$ tuples in a trajectory.
If we extend our notation from above to use $\tau^{(A \to B)}$ to denote a path from state $s_A$ to state $s_B$, then we have the following useful result,
\begin{align}
    p(\tau^{(-1)} = s_G \mid \tau^{(A \to B)}) &\propto p(\tau^{(A \to B)} \mid \tau^{(-1)} = s_G) ~ p(s_G)
    \\
    &\propto \frac{
        \sum_{\tau^{(B \to G)}} q(\tau) \exp(R(\tau))
    }{
        \sum_{\tau'^{(A \to G)}} q(\tau') \exp(R(\tau'))
    }
    ~
    p(s_G)
\end{align}
\noindent where $p(s_G)$ is a prior distribution over destinations.
This can be used to rank possible destination states and/or to provide a distribution over expected path lengths -- all of which may be useful in  planning or navigation type problems.

These examples serve to illustrate the some of benefits of performing reward learning in the context of a distribution over behaviours, rather than an action-based distribution, as in some other IRL schemes.

\section{Experimental Results}
\label{sec:experimental-results}

We verify the function of our algorithm using several synthetic MDPs from the OpenAI Gym library \parencite{Brockman2016}, and demonstrate our algorithm's scalability with a large real-world problem in driver behaviour forecasting.

\subsection{Characterizing reward recovery performance}

First, we verify empirically that the reward function our algorithm learns becomes more accurate as the number of demonstration paths increases.

As a metric for IRL algorithm performance, we choose the Inverse Learning Error (ILE), first proposed in \parencite{Choi2011}.
The ILE measures the quality of a learned reward function $R_\text{L}$ by comparing it with the ground truth reward $R_\text{GT}$ 
--- however, \naive{} comparison of reward values is meaningless due to the reward ambiguity problems discussed in \cref{sec:irl}.
Instead, ILE compares \emph{value} of the ground truth optimal policy, with the \emph{value} of the optimal policy \wrt{} the learned reward.
The ILE is given by,
\begin{align}
    \text{ILE} \triangleq \norm{
        \bm{v}(\pi^*_{R_\text{GT}}) - \bm{v}(\pi^*_{R_\text{L}})
    }_1,
\end{align}

\noindent where $\bm{v}(\pi)$ indicates the vector of state-values \wrt{} the \emph{ground truth} reward $R_\text{GT}$ for any arbitrary policy $\pi$, and $\pi^*_{R_\text{GT}}$ and $\pi^*_{R_\text{L}}$ denote the optimal policy \wrt{} the ground truth and learned reward functions respectively.
Note that the ILE is on the range $[0, \infty)$, where lower values indicate a closer match to the ground truth reward, and the upper bound is specific to each MDP.

We evaluated the quality of our algorithm's learned rewards on three discrete state- and action space problems from the OpenAI Gym library (shown in \Cref{tab:experiment-quality-environments}).
For each environment, we find the optimal stationary deterministic policy using value iteration \parencite{Puterman1978}, then sample demonstration datasets containing an increasing number of paths.
For each dataset, we learn a reward function, then compute the corresponding ILE.
Each experiment is repeated 50 times to average over environment stochasticity, and we plot the ILE mean and 90\% confidence intervals over the 50 repeats.

\begin{table}[H]
    \centering
    \caption{Environments used for reward recovery experiment.}
    \label{tab:experiment-quality-environments}
    \scriptsize
    \renewcommand{\arraystretch}{1.2}
    \begin{tabular}{@{}r|lccl@{}}
        \toprule
        Environment & Dynamics & \thead{$|\States|$} & $|\Actions|$ & Reward Type
        \\
        \midrule
        Taxi-v3 & \makecell[l]{Stochastic starting state\\Deterministic transitions\\Episodic} & 500 & 6 & $R(s, a)$
        \\[10pt]
        FrozenLake4x4-v0 & \makecell[l]{Deterministic starting state\\Stochastic transitions\\Episodic} & 16 & 4 & $R(s)$
        \\[10pt]
        NChain-v0, N=10 & \makecell[l]{Deterministic starting state\\Stochastic transitions\\Continuing (non-episodic)} & 10 & 2 & $R(s, a)$
        \\
        \bottomrule
    \end{tabular}
\end{table}

The results are shown in \cref{fig:experiment-quality} 
--- our algorithm always converges to a lower ILE as the number of paths increases, indicating that we are able to recover accurate reward representations, and these reward functions are more accurate with increasing numbers of demonstration paths.

Also of interest is the fact that, for the \texttt{FrozenLake4x4} environment, our algorithm converges to a non-zero ILE.
We verified that this is because optimal policies in this MDP, which are used for sampling demonstrations, only solve the environment (reaching a goal state) in ${\sim}82\%$ of episodes.
If we artificially filter the optimal policy rollouts so that the demonstration data contain only successful episodes, our algorithm converges to $0.0 \pm 0.0$ ILE.

\begin{figure*}[t]
    \centering
    \includegraphics[width=\linewidth]{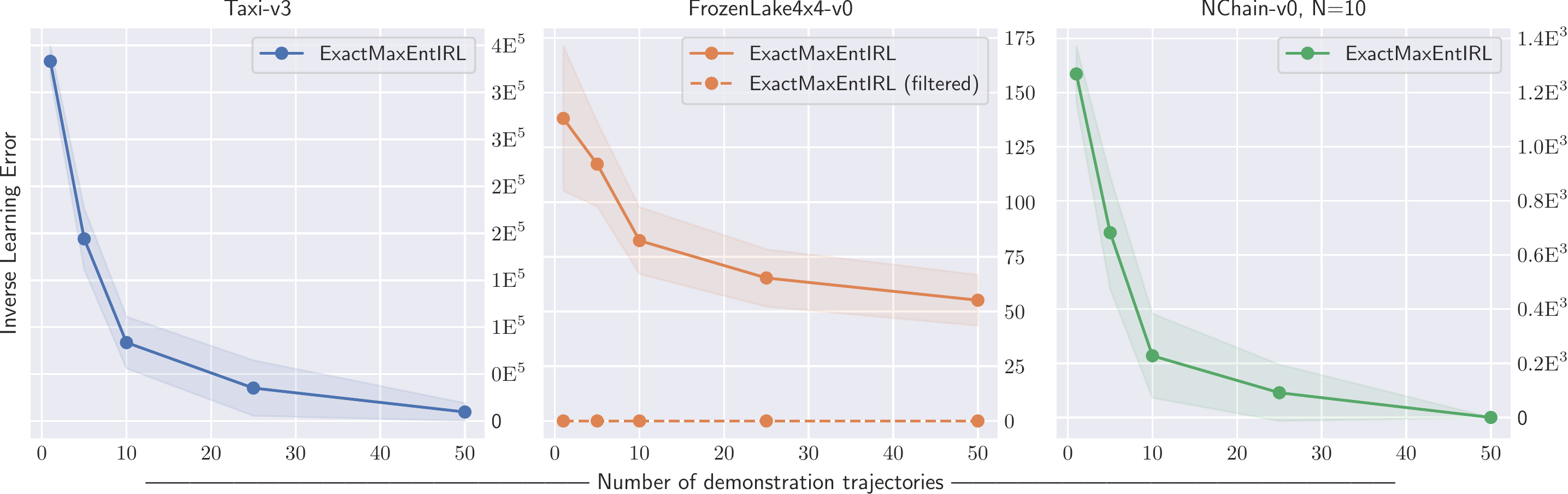}
    \caption{Our \textit{ExactMaxEntIRL} Algorithm Performance vs. Number of Demonstration Paths, showing means and 90\% confidence intervals over $50\times$ repeats.}
    \label{fig:experiment-quality}
\end{figure*}

\subsection{Empirical comparison with Ziebart's algorithms}

Without modification, the previous MaxEnt IRL algorithms by Ziebart \etal{} only support state-based reward features \parencite{Ziebart2008,Ziebart2010}.
The \texttt{FrozenLake4x4} environment consists of state-only rewards, which allows a fair comparison of the performance of our algorithm with these previous algorithms.

For the case of $N=50$ demonstration paths (not filtered to remove unsuccessful demonstrations), and with $50$ repeat experiments, our algorithm achieves an ILE mean and 90\% confidence interval of $55.2 \pm 18.3$, while Ziebart's 2008 algorithm achieves an ILE of $634.0 \pm 0.0$ and the 2010 algorithm achieves an ILE of $596.7 \pm 3.9$.

We also compute the log-likelihood of the demonstration data under each learned reward.
Our algorithm achieves a log-likelihood mean and 90\% confidence interval of $-133 \pm 7.78$ while Ziebart's 2008 and 2010 algorithms achieve log-likelihoods of $-336 \pm 31.2$ and $-365 \pm 33.0$ respectively.

For this specific environment, our algorithm out-performs Ziebart's Maximum Entropy algorithms on the ILE metric by a factor of over $10\times$, and the log-likelihood also confirms that our rewards are a better fit to the demonstrations.
These empirical data suggest that the approximate gradients from Ziebart's algorithms can sometimes have a negative effect on reward learning, which is also reflected in the results from our driver forecasting experiment, below.

\subsection{The padding trick improves computational efficiency}

Without the padding trick, our algorithm has a theoretical time complexity of $\Oh(|\States|^2 |\Actions| L^2)$, where $L$ is the length of the longest demonstration path.
With the padding trick, this dependence on $L$ becomes linear, $\Oh(|\States|^2 |\Actions| L)$.
We verify that this difference is important in practice, not just in theory.

To illustrate this, we again use the \texttt{FrozenLake} MDP template, but randomly generate unique environments of increasing size across three orders of magnitude.
For each problem size, we record the runtime required to learn a reward from a dataset of 10 paths using our algorithm in the padded, and non-padded configurations.
We repeat every experiment 30 times to average over variations in processor and memory utilization.
The experiments were performed on a Toshiba ThinkPad T480s laptop with an Intel {i7-8650U} {Quad-Core} CPU pinned at 2.1GHz, and with 24GB of RAM running Windows 10, 64-bit and using Python 3.6.9.

The results are shown in \cref{fig:experiment-speed}.
We plot the runtime mean and 90\% confidence interval vs. the problem size on a log-log scale.
The empirical behaviour aligns with our theoretical complexity analysis of the algorithm: the growth rate for both versions of our algorithm is slightly higher than linear in problem size $|\States|^2 |\Actions|$ 
--- a line with linear gradient is shown for comparison.
The results show small deviations from monotonic growth (e.g. the drop in runtime for the final point) 
--- we hypothesise that this is due to the low-level JIT compiler we utilize to optimize the Python code.

The results also confirm that the padding trick vastly improves the computational complexity of our algorithm, and that this improvement grows with the problem size.
For the small \texttt{FrozenLake4x4} MDP (problem size ${\sim}10^3$, third data-point from the left in figure), we see a ${\sim}10\times$ improvement in runtime, while for the larger \texttt{FrozenLake8x8} MDP (problem size ${\sim}10^{4.2}$, rightmost data-point in figure), we see a ${\sim}100\times$ improvement due to the padding trick.
These results are very encouraging, and suggest this algorithm is suitable for application to larger, real-world datasets, which we consider next.

\begin{figure}[t]
    \centering
    \includegraphics[width=\linewidth]{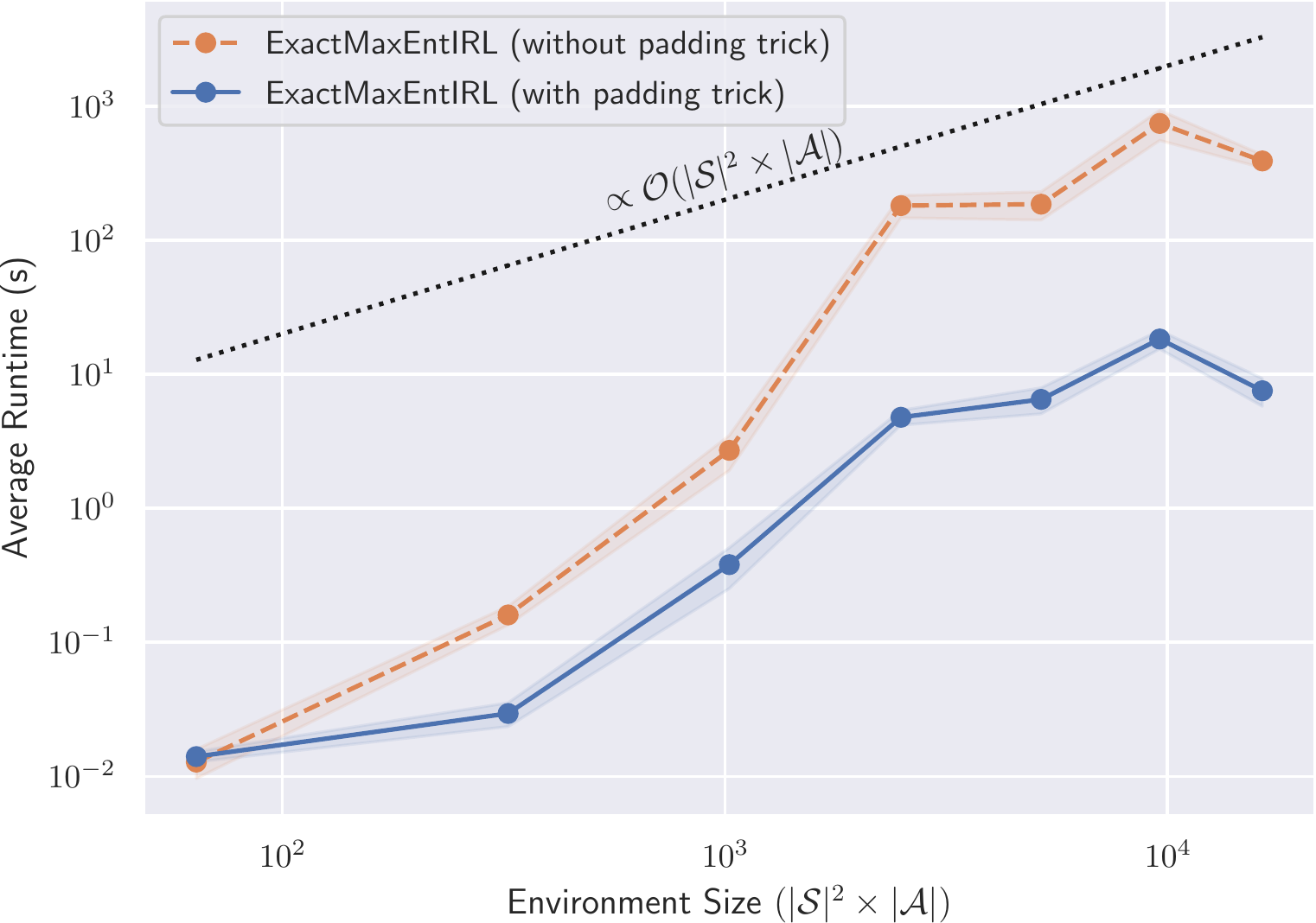}
    \caption{IRL Algorithm Runtime vs. Problem Size. Plots show mean and 90\% confidence intervals over $30\times$ repeats.}
    \label{fig:experiment-speed}
\end{figure}

\subsection{Example application: forecasting driver behaviour}

We demonstrate the utility of our algorithm by application to a large, real-world dataset similar to that used in the original MaxEnt IRL paper \parencite{Ziebart2008}.
The UCI Taxi Service Prediction dataset (the `Porto' dataset) contains over 1.7 million time-stamped GPS trajectories collected from the 442 taxis in the city of Porto, Portugal during 2013--14 \parencite{Dua2017}.

\subsubsection{Data pre-processing}
We adapted the Porto dataset to make it suitable for evaluating discrete state and action IRL algorithms.
Continuous GPS trajectories were fit using a particle filter to a discrete road network downloaded from \href{https://openstreetmap.org}{OpenStreetMap.org}.
Trajectories were removed from the dataset if they contained missing data, were shorter than 2 minutes in duration, contained cyclic paths, ventured outside a $15$km radius from the city, or if the particle filter did not converge.
This resulted in an MDP with 292,604 states (road segments), 594,933 actions (unique turns at road intersections), and with a stochastic starting state and deterministic transitions.
After filtering, the discretised path dataset contained 19,359 paths ranging in size from 5 to 840 states and length from 0.25 to 29km.
We excluded outlier paths with more than 400 states, and segmented into a 70\% training set (13551 paths) and 30\% held-out test set (5808 paths).

\subsubsection{Reward feature selection}
To allow comparison with Ziebart's algorithm, we selected a state-only reward feature representation.
As state features we utilised the number of lanes ($1$, $2$, or ${>}\,2$), road type (`local', `major', `highway', or `other'), speed limit (${<}\,35$km/h, ${35-55}$km/h, ${55-85}$km/h, ${>}\,85$km/h, or `unknown') and toll status (`toll' or `no toll'), giving a 14-dimensional indicator vector $\bm{I}(s)$ which we multiplied by the distance of a road segment in meters, $\bm{\phi_s}(s) \triangleq \bm{I}(s) \times \text{dist}(s)$.

\subsubsection{Evaluation Metrics}
The ILE metric used in our synthetic experiments requires that the ground truth reward function be known.
In the absence of a ground truth reward function, different evaluation metric(s) must be used.
We used two evaluation metrics, as follows:

\begin{itemize}
    \item \emph{Distance Match Percentage} $\in [0, 100]$, higher is better.
    Measures the percentage of distance of the predicted maximum likelihood path that matches the ground truth path.
    This is a domain-specific approximate measure of predictive accuracy of the policies induced by a learned reward.
    
    \item \emph{Feature Distance} $\in [0, \infty)$, lower is better, units are km.
    Measures the $L^2$ norm between the predicted maximum likelihood path's feature vector and that of the ground truth path.
    This is a domain-agnostic metric that quantifies how well the trained model matches the demonstrated preferences in the data.
\end{itemize}

\subsubsection{Results}
Using optimized implementations on a 24-core cluster workstation with Intel Xeon E5-2760 v3 CPUs at 2.4Ghz and 384GB of RAM, individual models took ${\sim}8$hrs to train to convergence using the L-BFGS-B optimizer, while evaluating a model against the test and the training data took ${\sim}60$hrs.

The results are shown in \cref{tab:porto-results}.
We compare the performance of our algorithm with that from \parencite{Ziebart2008}, as well as two baseline models - an agent that always chooses the shortest (distance) path\footnotemark{}, and a MaxEnt model with sampled random normal reward weights.
\footnotetext{%
    This is based on the assumption that taxi drivers (or their customers) might prefer a direct route to a destination.
}
For each model we report the distance match and feature distance metrics to three significant figures.
We report the median (and 90\% confidence interval of the median) as the result distributions are skewed 
--- however the non-overlapping confidence intervals and relative performance ranking for each metric are unchanged if the mean is used instead.

The results show that our model outperforms the others on predictive accuracy (the distance match metric), as well as in preference matching (the feature distance metric).
For both metrics, the algorithm from \parencite{Ziebart2008}, and the MaxEnt model with random normal weights perform significantly worse than either our algorithm or the shortest path heuristic 
--- by ${\sim}3$, and ${\sim}1$ orders-of-magnitude for the feature distance and distance match metrics respectively.

\begin{table*}[H]
    \centering
    \caption{Driver behaviour forecasting problem: experimental results}
    \label{tab:porto-results}
    \renewcommand{\arraystretch}{1.3}
    \begin{tabular}{@{}r|cc@{}}
        \toprule
        Algorithm & \makecell{Distance Match (\%)\\Median (90\% C.I.)} & \makecell{Feature Distance (km)\\Median (90\% C.I.)}
        \\
        \midrule
        ExactMaxEntIRL (Ours) & \textbf{64.3 (62.4 -- 66.1)} & \textbf{0.840 (0.797 -- 0.858)}
        \\
        Shortest Path & 50.5 (48.9 -- 51.7) & 0.995 (0.963 -- 1.02)
        \\
        \parencite{Ziebart2008} & 31.4 (30.4 -- 32.4) & 178 (177 -- 179)
        \\
        Random Weights & 27.1 (26.4 -- 28.0) & 220 (219 -- 221)
        \\
        \bottomrule
    \end{tabular}
\end{table*}

\section{Discussion}
\label{sec:discussion}

We presented new perspective and algorithms, including a new interpretation
that unifies MaxEnt IRL and RE-IRL with several implications, and an efficient
exact algorithm that leads to improved reward learning and is capable of scaling
up to a large real-world dataset.
We make an optimized implementation compatible with OpenAI Gym environments
publicly available to facilitate further research and applications.

We plan to follow up this work with some further developments. 
First, as mentioned in \cref{sec:related-work}, we can develop exact
algorithms to handle more complex features by adapting the sum-product
algorithm.
This can potentially lead to further performance improvement when complex
features are indeed necessary.
Second, we pointed out that our new interpretation of MaxEnt IRL suggests that
we can directly adapt the model-free importance sampling learning algorithm for
RE-IRL to MaxEnt IRL.
While this may be biased towards short demonstrations, this allows us to deal
with continuous MDPs.
In addition, in principle, we can choose an alternative reference distribution to
encode any other prior preference.
This needs to be further explored and empirically evaluated against the exact
algorithms.
Lastly, the MaxEnt IRL model in fact learns a reward function for a
non-stationary policy (that is, the MaxEnt trajectory distribution), however we (and others) treat the learned reward function as suitable for stationary policies, because it is
computationally easier to evaluate the performance of a stationary policy.
Our experiments suggest that the learned reward function are often suitable for a
stationary policy.
We hope to better understand when the reward function is suitable for a stationary
policy, and develop an effective method of using the learned reward together
with a non-stationary policy.

\section*{Acknowledgment}
\addcontentsline{toc}{section}{Acknowledgment}

Aaron Snoswell is supported by through an Australian Government Research Training Program Scholarship.

\clearpage
\appendix

\section{Appendix}

We briefly show that two previously published algorithms for the MaxEnt IRL problem do not always compute exact feature expectations.
We encourage the reader to reference the original algorithms \parencite{Ziebart2008,Ziebart2010} to follow the notation in the proofs.

\begin{proposition}
    Algorithm 1 from \parencite{Ziebart2008} computes incorrect feature expectations for any MDP with uniform starting state distribution and uniform dynamics, but non-uniform state reward function.
    \label{thm:zb08}
\end{proposition}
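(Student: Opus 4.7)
The plan is to exhibit a direct computation of both the exact marginals and the state-visitation frequencies produced by the forward pass of Ziebart's Algorithm 1 under the stated hypotheses, and observe that they disagree whenever $R$ is non-constant.

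First I would use the MaxEnt definition \cref{eq:maxent-dist} to derive the exact marginal $p_{\Theta,t}(s)$ in closed form for this class of MDPs. With $p_0(s) = 1/|\States|$ and $T(s' \mid s, a) = 1/|\States|$, the dynamics factor $q(\tau)$ equals the trajectory-independent constant $(1/|\States|)^{|\tau|}$, so the joint density separates as $p_\Theta(\tau) \propto \prod_t e^{R(s_t)}$. Summing over action sequences and over the states at positions other than $t$ (each of the latter contributing a factor $\sum_{s'} e^{R(s')}$), the partition cancellation yields
\begin{align*}
    p_{\Theta,t}(s) = \frac{e^{R(s)}}{\sum_{s' \in \States} e^{R(s')}},
\end{align*}
which depends on $s$ as soon as $R$ is non-constant.

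Next I would analyse Ziebart's forward recursion. Writing $\mu_t(s)$ for the state-visitation frequency produced by Algorithm 1 and $\pi(a \mid s)$ for the local action distribution it extracts from its backward sweep, the recurrence reads
\begin{align*}
    \mu_{t+1}(s') = \sum_{s \in \States}\sum_{a \in \Actions} \mu_t(s)\,\pi(a \mid s)\,T(s' \mid s, a).
\end{align*}
Substituting $T(s' \mid s, a) = 1/|\States|$ pulls the $1/|\States|$ outside the double sum, and then using $\sum_a \pi(a \mid s) = 1$ followed by $\sum_s \mu_t(s) = 1$ collapses the right-hand side to $1/|\States|$. By induction from $\mu_1 \equiv p_0 = 1/|\States|$, we get $\mu_t(s) = 1/|\States|$ for every $t$ and every $s$, regardless of what the backward sweep returned for $\pi(\cdot \mid \cdot)$.

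Finally I would contrast the two feature expectations: the algorithm produces $\sum_t (1/|\States|)\phi(s)$, i.e.\ a uniform average of the feature vector, while the exact expression from \cref{eq:s-grad} weights $\phi(s)$ by $e^{R(s)}/\sum_{s'} e^{R(s')}$. These coincide iff all $R(s)$ are equal (the constant drops out of the softmax), so the hypothesis that $R$ is non-uniform gives an explicit numerical disagreement and hence a concrete counter-example. The main obstacle is nothing mathematical but rather notational: carefully mapping the symbols of Algorithm 1 in \parencite{Ziebart2008} (terminal conditions, horizon, whether the goal absorbs, and the indexing convention for $\mu$) onto the generalized definitions used in this paper so that the reduction $\mu_t \equiv 1/|\States|$ is manifestly within the scope of their published procedure, rather than an artifact of a particular implementation choice.
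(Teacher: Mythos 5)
Your proof is correct and follows essentially the same route as the paper's: both argue by induction that the forward recursion of Algorithm 1, under uniform $p_0(s)=1/|\States|$ and uniform $T(s'\mid s,a)=1/|\States|$, collapses to the reward-independent constant $\mu_t(s)=1/|\States|$ for all $t$ and $s$ (regardless of the policy produced by the backward sweep), which contradicts the reward dependence of the true marginals whenever $R$ is non-constant. Your explicit closed form for the exact marginal, $p_{\Theta,t}(s)=e^{R(s)}/\sum_{s'\in\States}e^{R(s')}$, spells out what the paper leaves implicit in its final sentence (``for any non-uniform reward we will have a contradiction''), but this is a refinement of the same counterexample rather than a different approach.
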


\begin{proof}
    Consider an MDP with uniform initial distribution and transition dynamics,
    \begin{align*}
        p_0(s) &\triangleq 1/|\States| &\forall s \in \States
        \\
        T(s' \mid s, a) &\triangleq 1 / |\States| &\forall s \in \States, a \in \Actions
    \end{align*}
    By `Algorithm 1', step (4), we have $D_{s,1} = p_0(s) = 1/|\States| ~\forall s \in \States$.
    Computing the next timestep for $D_{s,t}$ using step (5), we have
    \begin{align*}
        D_{s,2} &= \sum_{a \in \Actions} \sum_{s' \in \States} D_{s',1} ~ p(a \mid s) ~ T(s' \mid s, a)
        \\
        &= \sum_{a \in \Actions} \sum_{s' \in \States} \frac{1}{|\States|} ~ p(a \mid s) ~ \frac{1}{|\States|}
        \\
        &= \left(\sum_{a} p(a \mid s) \right) \left( \sum_{s'} \frac{1}{|\States|^2} \right)
        = (1) \left( \frac{1}{|\States|} \right)
        \\
        \implies D_{s,t} &= \frac{1}{|\States|} \qquad \forall s \in \States, t = 1, \dots, N.
    \end{align*}
    Thus, the computed state marginals do not depend on the reward function.
    This is only true for a degenerate reward $R(s) = \text{const.}$, for any non-uniform reward we will have a contradiction.
\end{proof}
The same paper was updated in 2010 with minor revisions to the algorithm regrading the handling of terminal states \parencite{Ziebart2010}.
The above proof also applies to this updated algorithm, as we show below.

\begin{proposition}
    Algorithm 1 from \parencite{Ziebart2010} computes incorrect feature expectations for any MDP with uniform starting state distribution and uniform dynamics, but non-uniform state reward function.
\end{proposition}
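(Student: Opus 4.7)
The plan is to carry out essentially the same calculation as in the proof of \cref{thm:zb08}, verifying that the minor revision introduced in \parencite{Ziebart2010} does not disturb the cancellation argument. First I would fix the MDP to be one with no terminal states (e.g.\ a continuing MDP with $|\States|$ states, a uniform initial distribution, and uniform transition dynamics), so that the 2010 revision regarding terminal-state handling is vacuous on this instance. This immediately reduces the comparison to a line-by-line check of the recurrence for $D_{s,t}$, which is structurally identical to the 2008 version on such an MDP.

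Next I would apply the updated Algorithm 1 step-by-step. Initialization gives $D_{s,1} = p_0(s) = 1/|\States|$ for every $s$, as before. The inductive step uses the transition update, and substituting $T(s'\mid s,a) = 1/|\States|$ together with $\sum_a p(a \mid s) = 1$ again collapses the update to
\begin{align*}
    D_{s,t+1} &= \sum_{a \in \Actions} \sum_{s' \in \States} D_{s',t} \, p(a \mid s) \, T(s' \mid s, a)
    = \frac{1}{|\States|},
\end{align*}
so by induction $D_{s,t} = 1/|\States|$ for all $s$ and $t$. Since the computed marginals are independent of $\Theta$, the induced feature expectations cannot agree with the true MaxEnt marginals whenever $R(s)$ is non-uniform, which produces the desired contradiction.

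The one genuine thing to double-check, which I would flag as the main obstacle, is the revised treatment of terminal states in the 2010 algorithm: I want to make sure that choosing an MDP with no terminal states is admissible under the algorithm's stated input assumptions, and that no additional normalization or absorbing-state contribution is silently injected into the forward recursion. If the 2010 statement insists on the presence of a terminal state, I would instead designate a single terminal state and verify that, because both $p_0$ and $T$ are uniform and the terminal-handling modification only redistributes mass within the terminal row, the induction $D_{s,t} = 1/|\States|$ still holds on the non-terminal states; the conclusion that $D_{s,t}$ is independent of the reward parameters then goes through unchanged, completing the proof.
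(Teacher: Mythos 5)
Your proposal is correct and follows essentially the same route as the paper's proof: initialize $D_{s,1}=p_0(s)=1/|\States|$, observe that uniform dynamics make the forward update collapse to $1/|\States|$ regardless of the (reward-dependent) policy $p(a\mid s)$, and conclude by induction that the marginals are independent of the reward parameters, contradicting any non-uniform $R(s)$. Your extra care about whether the 2010 terminal-state revision could disturb the induction is a sensible refinement the paper leaves implicit, but it does not change the argument.
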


\begin{proof}
    Consider the same MDP as \cref{thm:zb08}, with uniform initial distribution and transition dynamics.
    \begin{align*}
        p_0(s) &\triangleq 1/|\States| &\forall s \in \States
        \\
        p(s' \mid s, a) &\triangleq 1 / |\States| &\forall s \in \States, a \in \Actions
    \end{align*}
    \noindent By `Algorithm 1', step (4), we have $D_{s,1} = p_0(s) = 1/|\States| ~\forall s \in \States$.
    Computing the next timestep for $D_{s',t}$ using step (5), we have
    \begin{align*}
        D_{s',2} &= \sum_{s \in \States} \sum_{a \in \Actions} D_{s,1} ~ p(a \mid s) ~ p(s' \mid s, a)
        \\
        &= \sum_{s \in \States} \sum_{a \in \Actions} \frac{1}{|\States|} ~ p(a \mid s) ~ \frac{1}{|\States|}
        = \frac{1}{|\States|^2} \sum_{s \in \States} \left( \sum_a p(a \mid s) \right)
        \\
        &= \frac{1}{|\States|^2} \sum_{s \in \States} (1)
        = \frac{1}{|\States|}
        \\
        \implies D_{s',t} &= \frac{1}{|\States|} \qquad \forall{s' \in \States, t = 1, \dots, N}
    \end{align*}
    
    Once again, for any non-uniform reward there will be a contradiction.
    
\end{proof}

\clearpage
\printbibliography[
    title={Bibliography}
]

\end{document}